\documentclass[twoside]{article}

%\usepackage{aistats2021}
% If your paper is accepted, change the options for the package
% aistats2020 as follows:
%
\usepackage[accepted]{aistats2021}
%
% This option will print headings for the title of your paper and
% headings for the authors names, plus a copyright note at the end of
% the first column of the first page.

% If you set papersize explicitly, activate the following three lines:

\setlength{\pdfpageheight}{11in}
\setlength{\pdfpagewidth}{8.5in}

% If you use natbib package, activate the following three lines:
\usepackage[round]{natbib}

% If you use BibTeX in apalike style, activate the following line:
%\bibliographystyle{apalike}

\usepackage{amsmath,amssymb,amsthm,graphicx}
\usepackage{algorithm,algpseudocode}

\newtheorem{theorem}{Theorem}[section]
\newtheorem{remark}[theorem]{Remark}
\newtheorem{lemma}[theorem]{Lemma}

\newtheorem{definition}[theorem]{Definition}

\begin{document}

% If your paper is accepted and the title of your paper is very long,
% the style will print as headings an error message. Use the following
% command to supply a shorter title of your paper so that it can be
% used as headings.
%
%\runningtitle{I use this title instead because the last one was very long}

% If your paper is accepted and the number of authors is large, the
% style will print as headings an error message. Use the following
% command to supply a shorter version of the authors names so that
% they can be used as headings (for example, use only the surnames)
%
%\runningauthor{Surname 1, Surname 2, Surname 3, ...., Surname n}

\twocolumn[

\aistatstitle{Algorithms for Fairness in Sequential Decision Making}

\aistatsauthor{ Min Wen \And Osbert Bastani \And  Ufuk Topcu }

\aistatsaddress{ University of Pennsylvania \And University of Pennsylvania \And University of Texas at Austin } ]

\begin{abstract}
It has recently been shown that if feedback effects of decisions are ignored, then imposing fairness constraints such as demographic parity or equality of opportunity can actually exacerbate unfairness. We propose to address this challenge by modeling feedback effects as Markov decision processes (MDPs). First, we propose analogs of fairness properties for the MDP setting. Second, we propose algorithms for learning fair decision-making policies for MDPs. Finally, we demonstrate the need to account for dynamical effects using simulations on a loan applicant MDP.

\end{abstract}

\section{Introduction}

Machine learning has the potential to substantially improve performance in tasks such as legal and financial decision-making. However, biases in the data can be reflected in a decision-making policy trained on that data~\citep{dwork2012fairness}, which can result in decisions that unfairly discriminate against minorities. For example, consider the problem of deciding whether to give loans to applicants~\citep{hardt2016equality}. If minorities are historically given loans less frequently, then there may be less data on how reliably they repay loans. Thus, a learned policy may unfairly label minorities as higher risk and deny them loans.

So far, work on fairness has largely focused on supervised learning. However, na\"{i}vely imposing fairness constraints while ignoring even one-step feedback effects can actually harm minorities~\citep{liu2018delayed,creager2019causal,d2020fairness}. Thus, we must extend existing definitions of fairness to account for the feedback effects of the decisions being made on population members. For example, denying loans to individuals may have consequences on their financial security that need to be taken into account.

This paper proposes algorithms for learning fair decision-making policies that account for feedback effects of decisions. We model these effects as the dynamics of a Markov decision process (MDP), and extend existing fairness definitions to decision-making policies for a known MDP. We distinguish the quality of outcomes for the decision-maker (e.g., the bank) from the quality of the outcomes for individuals (e.g., a loan applicant). Then, fairness properties are constraints on the average quality of outcomes for individuals in different subpopulations (e.g., majorities and minorities are offered loans at the same frequency), whereas the reward measures the quality of outcomes for the decision-maker (e.g., the bank's profit). The key challenge is that learning with a fairness constraint is much more challenging in the MDP setting due to the inherent non-convexity. Building on work on constrained MDPs~\citep{altman1999constrained,wen2018constrained}, we propose novel algorithms for learning policies that satisfy fairness constraints. In particular, we propose two algorithms. First, we propose a model-based algorithm based that has optimality guarantees, but is limited to MDPs with finite state and action spaces and satisfies a \emph{separability} assumption saying that the sensitive attribute does not change over time. Second, we propose a model-free algorithm that is very general, but may not find the optimal policy.

We compare to two baselines that ignore dynamics: (i) an algorithm that optimistically pretends actions do not affect the state distribution (i.e., supervised learning), and (ii) an algorithm that conservatively assumes the state distribution can change adversarially on each step. In a simulation study on a loan applicant MDP based on~\citep{hardt2016equality}, we show that compared to our algorithm, the optimistic algorithm learns unfair policies, and the conservative algorithm learns fair but poorly performing policies. Our results demonstrate the importance of accounting for dynamics.\footnote{Our code is at: \url{https://github.com/wmgithub/fairness}.}

\textbf{Related work.}
For supervised learning, there have been several definitions of fairness, including demographic parity (i.e., members of the majority and miniority subpopulations have equal outcomes on average)~\citep{calders2009building}, equality of opportunity (i.e., \emph{qualified} members have equal outcomes on average)~\citep{hardt2016equality}, individual fairness~\citep{dwork2012fairness}, and causal fairness (i.e., protected attributes should not influence outcomes)~\citep{kusner2017counterfactual,kilbertus2017avoiding,nabi2018fair}. The appropriate definition depends on the application.

There has been recent interest in fairness for sequential decision making. For instance, ~\cite{liu2018delayed} has studied one-step feedback effects, ~\cite{creager2019causal} studies the impact of dynamics on fairness via simulations, and ~\citep{d2020fairness} proposes tools from causal inference to study fairness with dynamics. However, none of these approaches propose learning algorithms. For instance, the model in \cite{liu2018delayed} is highly stylized (e.g., they only consider a single time step) since their goal is to demonstrate the necessity of accounting for sequential decisions rather than study the general problem of algorithms for ensuring fairness in sequential decision-making.

In the case of unknown dynamics, there has been work in the bandit setting~\citep{joseph2016fairness,hashimoto2018fairness} and the MDP setting~\citep{jabbari2017fairness,elzayn2018fair}. However, they focus on fairness constraints for which the optimal policy is always fair, so solving for the optimal fair policy is trivial once the dynamics are known. In contrast, we are interested in the setting where fairness constraint is nontrivial even when the dynamics are known. There has been recent work studying fairness constraints~\citep{bechavod2019equal,kilbertus2019fair} in the setting of selective labels~\citep{lakkaraju2017selective}; however, there is no state in their setting. In addition, \cite{awasthi2020beyond} study how fairness definitions can be updated over time based on feedback; in their model, individuals do not recur across time steps as they do in ours.

There has been work on constrained MDPs~\citep{altman1999constrained,achiam2017constrained,wen2018constrained}. However, these approaches focus on constraints that bound some state-dependent cost function; in contrast, fairness constraints say that statistics of different groups must be equalized in some way.

\section{Fairness Constraints for MDPs}
\label{sec:problem}

\textbf{Preliminaries.}
A \emph{Markov decision process (MDP)} is a tuple $M=(S,A,D,P,R,\gamma)$, where $S=[n]=\{1,...,n\}$ are the states, $A=[m]$ are the actions, $D\in\mathbb{R}^{|S|}$ is the initial state distribution (i.e., $D_s$ is the probability of starting in state $s$), $P\in\mathbb{R}^{|S|\times|A|\times|S|}$ are the transitions (i.e., $P_{s,a,s'}$ is the probability of transitioning from $s$ to $s'$ taking action $a$), $R\in\mathbb{R}^{|S|\times|A|}$ are the rewards (i.e., $R_{s,a}$ is the reward obtained taking action $a$ in state $s$), and $\gamma\in\mathbb{R}$ is the discount factor. Let $\pi\in\mathbb{R}^{|S|\times|A|}$ be a stochastic policy (i.e., $\pi_{s,a}$ is the probability of taking action $a$ in state $s$). The induced transtions are $P^{(\pi)}\in\mathbb{R}^{|S|\times|S|}$, where $P^{(\pi)}_{s,s'}=\sum_{a\in A}\pi_{s,a}P_{s,a,s'}$. The time-discounted state distribution is
\begin{align*}
D^{(\pi)}&=(1-\gamma)\sum_{t=0}^{\infty}\gamma^tD^{(\pi,t)}
\end{align*}
where
\begin{align*}
D^{(\pi,t)}=\begin{cases}D&\text{if}~t=0\\P^{(\pi)}D^{(\pi,t-1)}&\text{otherwise},\end{cases}
\end{align*}
and the time-discounted state-action distribution is $\Lambda\in\mathbb{R}^{|S|\times|A|}$, where $\Lambda_{s,a}^{(\pi)}=D_s^{(\pi)}\pi_{s,a}$. Note that $\sum_a\pi_{s,a}=1$ and $\sum_sD_s^{(\pi)}=1$, so $\sum_{s,a}\Lambda_{s,a}=1$. The cumulative expected reward is
\begin{align*}
R^{(\pi)}
=(1-\gamma)\sum_{t=0}^{\infty}\gamma^t\langle R,\Lambda^{(\pi,t)}\rangle
=\mathbb{E}_{(s,a)\sim\Lambda^{(\pi)}}[R_{s,a}],
\end{align*}
where $\langle X,Y\rangle=\sum_{s\in S}\sum_{a\in A}X_{s,a}Y_{s,a}$; we include a normalizing constant of $1-\gamma$ to simplify notation, which does not affect the reinforcement learning problem since $R^{(\pi)}$ is scaled equally for different policies. Given policy class $\Pi$, the optimal policy is $\pi^*=\operatorname*{\arg\max}_{\pi\in\Pi}R^{(\pi)}$.

\textbf{Fairness.}
Consider a population of individuals (e.g., loan applicants) interacting with a decision-maker (e.g., a bank). States $S$ encode an individual's features (e.g., probability of repaying), actions $A$ are interventions (e.g., loan offer), and transitions $P$ encode state changes (e.g., changes in ability to repay). The decision-maker rewards are not always aligned with individual rewards, so we use rewards $R$ to indicate quality of outcomes for the decision-maker (e.g., the bank's profit), and \emph{individual rewards} $\rho\in\mathbb{R}^{|S|\times|A|}$ to indicate quality of outcomes for an individual (e.g., whether a loan is offered). The cumulative expected individual rewards is $\rho^{(\pi)}=\mathbb{E}_{(s,a)\sim\Lambda^{(\pi)}}[\rho_{s,a}]$.

Our goal is to learn the optimal policy for the decision-maker under a fairness constraint on the individual rewards. In particular, we want to ensure that $\pi$ does not favor the \emph{majority subpopulation} over the \emph{minority subpopulation}. The specific fairness constraint that should be used depends on the problem domain. We show how two constraints from the supervised learning setting can be extended to the MDP setting; as we discuss below, our results are more general.

First, we have the following extension of demographic parity to the MDP setting:
\begin{definition}
\rm
Let $\epsilon\in\mathbb{R}_+$, $M$ be an MDP with states $S=Z\times\tilde{S}$, where $Z=\{\text{maj},\text{min}\}$, and $\rho\in\mathbb{R}^{|S|\times|A|}$ be the individual rewards. For $z\in Z$, let
\begin{align*}
\Lambda_z^{(\pi)}=\Lambda^{(\pi)}\mid\exists\tilde{s}\in\tilde{S}~.~s_0=(z,\tilde{s})
\end{align*}
be the time-discounted state-action distribution conditioned on starting from an initial state $s_0$ in subpopulation $z$---i.e., $s_0$ has the form $s_0=(z,\tilde{s}_0)$ for some $\tilde{s}_0\in\tilde{S}$. More precisely,
\begin{align*}
(\Lambda_z^{(\pi)})_{s,a}&=(D_z^{(\pi)})_s\pi_{s,a}\hspace{0.1in}(\forall s\in S,a\in A) \\
(D_z^{(\pi)})_s&=(1-\gamma)\sum_{t=0}^{\infty}(\gamma P^{(\pi)})^tD_z\hspace{0.1in}(\forall s\in S) \\
(D_z)_{s_0}&=\nu^{-1}\cdot D_{s_0}\cdot\mathbb{I}[\exists\tilde{s}\in\tilde{S}~.~s_0=(z,\tilde{s})]\hspace{0.1in}(\forall s_0\in S),
\end{align*}
where $\nu$ is a normalizing constant. Furthermore, let $\rho^{(\pi)}$ conditioned on starting in subpopulation $z$ is $\rho_z^{(\pi)}=\mathbb{E}_{(s,a)\sim\Lambda_z^{(\pi)}}[\rho_{s,a}]$. Then, we say a policy $\pi$ satisfies $\epsilon$ \emph{demographic parity} if $|\rho_{\text{maj}}^{(\pi)}-\rho_{\text{min}}^{(\pi)}|\le\epsilon$.
\end{definition}
For an individual $(\tilde{s},z)\in S$, $z$ encodes whether they are from the majority ($z=\text{maj}$) or minority ($z=\text{min}$) subpopulation and $\tilde{s}$ encodes their non-sensitive characteristics (e.g., probability of repaying a loan); demographic parity says the cumulative expected individual rewards are equal for the majority and minority subpopulations. Next, we have the following analog of equal opportunity~\citep{hardt2016equality}:
\begin{definition}
\rm
Let $\epsilon\in\mathbb{R}_+$, let $M$ be an MDP with states $S=Z\times Y\times\tilde{S}$, where $Z=\{\text{maj},\text{min}\}$ and $Y=\{\text{qual},\text{unqual}\}$, and let $\rho\in\mathbb{R}^{|S|\times|A|}$ be the individual rewards. For each $z\in Z$, let
\begin{align*}
\rho_z^{(\pi)}&=\mathbb{E}_{(s,a)\sim\Lambda_z^{(\pi)}}[\rho_{s,a}] \\
\Lambda_z^{(\pi)}&=\Lambda^{(\pi)}\mid\exists\tilde{s}\in\tilde{S}~.~s_0=(z,\text{qual},\tilde{s}).
\end{align*}
A policy $\pi$ is $\epsilon$ \emph{equal opportunity} if $|\rho_{\text{maj}}^{(\pi)}-\rho_{\text{min}}^{(\pi)}|\le\epsilon$.
\end{definition}
This property is similar to demographic parity, but where $\Lambda_z^{(\pi)}$ is restricted to the \emph{qualified} subpopulation (i.e., $y=\text{qual}$). In other words, this property says that cumulative expected individual rewards are equal on average for qualified members of the majority and minority subpopulations.
\begin{remark}
\rm
In general, our algorithms apply to any fairness constraint that two subpopulations should have equal expected outcomes---i.e., for any $S_{\text{maj}},S_{\text{min}}\subseteq S$, letting $\rho_z^{(\pi)}=\mathbb{E}_{(s,a)\sim\Lambda_z^{(\pi)}}[\rho_{s,a}]$ and $\Lambda_z^{(\pi)}=\Lambda^{(\pi)}\mid\mathbb{I}[s_0\in S_z]$, the constraint $|\rho_{\text{maj}}^{(\pi)}-\rho_{\text{min}}^{(\pi)}|\le\epsilon$. They also extend to one-sided inequalities and to multiple majority and minority subpopulations. They also extend to batch decisions; see Appendix~\ref{sec:appendixdiscussion}.
\end{remark}
We focus on demographic parity when describing our algorithms, but our results are general. Letting $\Pi_{\text{DP},\epsilon}$ be the class of policies satisfying demographic parity, our goal is to compute the optimal policy
\begin{align}
\label{eqn:problem}
\pi_{\text{DP}}^*=\operatorname*{\arg\max}_{\pi\in\Pi_{\text{DP},\epsilon}}R^{(\pi)}.
\end{align}
We primarily focus is on settings where the MDP is known, which includes settings where the decision-maker learns about individuals via their interactions (see example below), but not ones where they learn across individuals. We describe a basic extension to unknown MDPs in Section~\ref{sec:rl}.

\textbf{Example.}
We describe an MDP $M_{\text{loan}}$ that models individuals applying for loans. We assume each individual has a true probability $p$ of repaying their loan. On step $t$, the bank has an estimate of the distribution of $p$ (e.g., a credit score); we assume this distribution is a Beta distribution---i.e., $p_t\sim\text{Beta}(\alpha_t,\beta_t)$. Thus, the states of our MDP $(\alpha_t,\beta_t)$.
\footnote{Technically, our MDP is the belief MDP of the POMDP where the state $p$ is unobserved.}
The actions are to offer ($a=1$) or deny ($a=0$) a loan. If the bank offers a loan, the transitions are
\begin{align*}
(\alpha_{t+1},\beta_{t+1})=
\begin{cases}
(\alpha_t+1,\beta_t)&\text{with probability }p_t \\
(\alpha_t,\beta_t+1)&\text{with probability }1-p_t.
\end{cases}
\end{align*}
If the bank denies the loan, the transitions are $(\alpha_{t+1},\beta_{t+1})=(\alpha_t,\beta_t)$. However, since we are interested in detrimental effects of the bank's decisions, we assume this decision reduces the applicant's ability to pay for future loans---i.e., $(\alpha_{t+1},\beta_{t+1})=(\alpha_t,\beta_t+\tau)$, where $\tau\in\mathbb{R}_+$ is a hyperparameter. We assume the initial state distribution is $z\sim\text{Bernoulli}(p_Z)$ and $(\alpha,\beta)\sim p_0(\alpha,\beta\mid z)$ for some $p_Z\in[0,1]$ and some distribution $p_0$---i.e., the initial distribution over the parameters $\alpha,\beta$ depends on the whether the applicant is from the majority or minority subpopulation. Note that $p_0$ can additionally be conditioned individual covariates if available. Now, the bank's rewards are
\begin{align}
\label{eqn:examplereward}
\mathbb{E}_{\delta}[\delta I-(1-\delta)P]-\lambda\sqrt{\text{Var}_{\delta}[\delta I-(1-\delta)P]},
\end{align}
where $P$ is the principal (without loss of generality, we let $P=1$), $I$ is interest, $\delta$ indicates whether the loan is repaid, and $\lambda\in\mathbb{R}_+$. The first term is expected profit and the second term is to risk aversion. We assume the goal of the bank is to maximize (\ref{eqn:examplereward}).

The individual rewards are $\mathbb{I}[a=1]$, where $\mathbb{I}$ is the indicator function---i.e., the reward is 1 if the loan is offered and 0 if it is denied. Then, demographic parity says that loans should be given to majority and minority members with equal frequency (within an $\epsilon$ tolerance), and equal opportunity says that loans should be given to qualified majority and minority members at equal rates (we assume an applicant is qualified if their true probability of repaying satisfies $p\ge p_0$ for some $p_0\in[0,1]$).

\begin{algorithm*}[tb]
\caption{Algorithm for finite state, separable MDPs.}
\label{alg:modelbased}
\begin{algorithmic}
\Procedure{LearnFairPolicy}{Separable MDP $M$}
\State Compute the solution $\lambda^*$ to the linear program
\begin{align*}
&\operatorname*{\arg\max}_{\lambda\in\mathbb{R}^{|S|\times|A|}}~(1-\gamma)^{-1}\sum_{s\in S}\sum_{a\in A}\lambda_{s,a}R_{s,a} \\
&\operatorname*{subj. to}~\sum_{a\in A}\lambda_{s',a}=(1-\gamma)D_{s'}+\gamma\sum_{s\in S}\sum_{a\in A}\lambda_{s,a}P_{s,a,s'}\hspace{0.2in}(\forall s'\in S) \\
&\hspace{0.523in}\bigg|~p_{\text{maj}}^{-1}\sum_{\tilde{s}\in\tilde{S}}\sum_{a\in A}\lambda_{(\text{maj},\tilde{s}),a}\rho_{(\text{maj},\tilde{s}),a}-p_{\text{min}}^{-1}\sum_{\tilde{s}\in\tilde{S}}\sum_{a\in A}\lambda_{(\text{min},\tilde{s}),a}\rho_{(\text{min},\tilde{s}),a}~\bigg|\le\epsilon
\end{align*}
\State \textbf{return} $\pi^*$, where $\pi^*_{s,a}=\dfrac{\lambda^*_{s,a}}{\sum_{a'\in A}\lambda^*_{s,a'}}$
\EndProcedure
\end{algorithmic}
\end{algorithm*}

\textbf{Separable MDPs.}
We focus primarily on MDPs where the fairness attribute is constant.
\begin{definition}
\label{def:separable}
\rm
An MDP with states $S=Z\times\tilde{S}$ is \emph{separable} if the transitions satisfy $P_{(z,\tilde{s}),a,(z',\tilde{s}')}=\delta_{z,z'}\tilde{P}_{\tilde{s},a,\tilde{s}'}$, where $\delta_{z,z'}=\mathbb{I}[z=z']$ and $\tilde{P}\in\mathbb{R}^{|\tilde{S}|\times|A|\times|\tilde{S}|}$ is a transition matrix.
\end{definition}
That is, the transitions do not affect $z$, so the sensitive attribute $z\in Z$ does not change over time. This property is satisfied by many sensitive attributes (e.g., race and gender). Fairnes properties may not make sense when the sensitive attribute can change.

\textbf{Existence and determinism.}
Unconstrained MDPs always have a deterministic optimal policy~\citep{sutton2018reinforcement}; however, with a fairness constraint, this result may not hold:
\begin{theorem}
\label{thm:nondeterministic}
There exists $\epsilon>0$ and an MDP $M$ such that $\Pi_{\text{DP},\epsilon}=\varnothing$. There exists $\epsilon>0$ and an MDP $M$ such that $\pi^*$ in (\ref{eqn:problem}) is not deterministic.
\end{theorem}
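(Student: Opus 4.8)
The theorem has two independent parts, each requiring only a small explicit counterexample, so the plan is to construct two tiny MDPs by hand.

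For the first claim ($\Pi_{\text{DP}}=\varnothing$), the plan is to force the majority and minority subpopulations into disjoint regions of the state space where the agent reward is structurally different regardless of the action taken. Concretely, I would take $\tilde S$ to have two states, and let the minority start (with probability $1$) in a state from which \emph{every} action yields agent reward $0$ (e.g.\ only deny-type transitions are available, or $\rho\equiv 0$ on that state, which is an absorbing state), while the majority starts in a state where every reachable state-action pair has agent reward $1$. Then $\mathbb{E}_{\Lambda_{\text{min}}^{(\pi)}}[\rho]=0$ and $\mathbb{E}_{\Lambda_{\text{maj}}^{(\pi)}}[\rho]=1$ for \emph{every} policy $\pi$, so no policy satisfies demographic parity and $\Pi_{\text{DP}}=\varnothing$. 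The discount factor and transitions on the majority side can be chosen trivially (e.g.\ a single absorbing state with $\rho=1$). This requires only checking that $D_z^{(\pi)}$ is supported on the claimed region, which is immediate since the two subpopulations never mix.

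For the second claim (optimal $\pi^*$ need not be deterministic), the plan is to build an MDP where demographic parity is achievable but only by randomization, and where among the randomized feasible policies one strictly dominates the deterministic feasible ones (or there are no deterministic feasible ones at all, in which case combine with the construction above — but cleaner is to make $\Pi_{\text{DP}}$ nonempty). I would use $\gamma$ small or even a one-step horizon ($\gamma\to 0$, so $D^{(\pi)}=D$ and everything reduces to the initial-state distribution) to kill dynamics and isolate the phenomenon. Take one minority state $s_{\text{min}}$ with two actions giving agent rewards $0$ and $1$, and similarly one majority state $s_{\text{maj}}$; the only deterministic policies give agent-reward pairs $(0,0),(0,1),(1,0),(1,1)$ for (maj,min), of which $(0,0)$ and $(1,1)$ satisfy parity. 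Now choose the decision-maker rewards $R$ so that on $s_{\text{min}}$ the bank strictly prefers action $1$ while on $s_{\text{maj}}$ it strictly prefers action $0$; then the deterministic feasible policies are suboptimal, whereas the \emph{stochastic} policy that plays action $1$ with probability $\tfrac12$ in each of $s_{\text{maj}},s_{\text{min}}$ satisfies parity (both sides equal $\tfrac12$) and — with $R$ tuned so the bank's reward is, say, linear and symmetric enough that the $(\tfrac12,\tfrac12)$ point beats both $(0,0)$ and $(1,1)$ — strictly outperforms every deterministic feasible policy. One then checks that no deterministic policy attains the optimum, which follows by enumerating the four deterministic policies and comparing $R^{(\pi)}$ values.

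The main obstacle is purely bookkeeping in the second part: ensuring simultaneously that (a) $\Pi_{\text{DP}}$ is nonempty, (b) the stochastic policy is feasible, and (c) its reward strictly exceeds that of \emph{all} deterministic feasible policies, while keeping the example as small as possible. This is a small linear/quadratic optimization over a handful of parameters ($I$, $\lambda$, or abstract reward entries) and should be dispatched by choosing round numbers and verifying a couple of inequalities; I would present the concrete MDP and then state the comparison of reward values, leaving the arithmetic verification to the reader. No subtlety beyond that is expected — the point of the theorem is just that the fairness constraint breaks the classical convex/deterministic structure, and both phenomena are visible already in $2$-to-$4$ state MDPs.
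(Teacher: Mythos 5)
Your construction for the first claim is correct and essentially the paper's own: the paper also separates the two subpopulations into disjoint absorbing regions where the minority's reachable agent rewards are identically $0$ while the majority's cumulative agent reward is a fixed positive constant for every policy, so no $\pi$ can equalize the two expectations and $\Pi_{\text{DP}}=\varnothing$.

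The second claim is where there is a genuine gap. In your proposed example the fairness constraint forces $\pi_{s_{\text{maj}},1}=\pi_{s_{\text{min}},1}=p$, and with dynamics killed ($\gamma\to 0$, so $D^{(\pi)}=D$) the bank's objective $R^{(\pi)}=(1-\gamma)^{-1}\langle R,\Lambda^{(\pi)}\rangle$ with $\Lambda^{(\pi)}_{s,a}=D_s\pi_{s,a}$ is an \emph{affine} function of $p$ on the feasible segment $\{(p,p):p\in[0,1]\}$. An affine function on a segment attains its maximum at an endpoint, and both endpoints $p=0$ and $p=1$ are deterministic policies. So no choice of $R$ (the reward in the paper's MDP formalism is a fixed matrix $R_{s,a}$, and $R^{(\pi)}$ is linear in the occupancy measure; you cannot inject the concavity you would need) makes the point $p=\tfrac12$ strictly beat both $(0,0)$ and $(1,1)$. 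The ``bookkeeping'' you defer to the reader cannot be dispatched: the inequalities you need are jointly infeasible. The route you explicitly set aside is the one that works and is what the paper does: construct an MDP in which $\Pi_{\text{DP}}$ is \emph{nonempty} but contains no deterministic policy at all. Concretely, the paper pins the majority's cumulative agent reward at $\tfrac12$ independently of $\pi$, while the minority's is $0$ or $1$ under the two deterministic choices at a branching state and $\tfrac12$ only under the policy that randomizes $\tfrac12$--$\tfrac12$ there; hence every fair policy, in particular $\pi^*$, is stochastic, with no tuning of $R$ required. (Your preferred phenomenon---deterministic fair policies exist but are strictly dominated by a stochastic one---can also be realized, but only by breaking the symmetry of the agent rewards across the two groups, e.g.\ $\rho_{\text{maj}}=(0,1)$ versus $\rho_{\text{min}}=(0,2)$ so that the fair polytope acquires a vertex that is non-deterministic for the minority; your symmetric $0/1$ setup cannot exhibit it.)
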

We give a proof in Appendix~\ref{sec:nondeterministicproof}. For the following special case, we can prove existence of fair policies:
\begin{definition}
\rm
We say $\rho$ is \emph{state-independent} if for some $\tilde{\rho}\in\mathbb{R}^{|A|}$, we have $\rho_{s,a}=\tilde{\rho}_a$ for all $s\in S$.
\end{definition}
Intuitively, this property captures settings where the decision-maker uses the state to choose actions (e.g., ability to repay), but the outcomes for the individuals only depend on whether the preferred action is taken (e.g., a loan offer). Our example $M_{\text{loan}}$ has state-independent individual rewards.
\begin{theorem}
If the individual rewards are state-independent, then (\ref{eqn:problem}) has a solution.
\end{theorem}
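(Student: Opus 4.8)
The plan is to split the statement into two parts: that $\Pi_{\text{DP}}$ is nonempty, and that the (continuous) objective $R^{(\pi)}$ attains its supremum over $\Pi_{\text{DP}}$. The second part is routine given the first, so the crux is producing one fair policy, where I would use the state-independence hypothesis directly.

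For nonemptiness, I would consider the \emph{constant} (state-independent) policies: fix any action distribution $c\in\Delta(A)$ and let $\pi^{(c)}_{s,a}=c_a$. For each $z\in Z$ the discounted occupancy $D_z^{(\pi^{(c)})}=(1-\gamma)\sum_{t\ge 0}(\gamma P^{(\pi^{(c)})})^tD_z$ is a probability distribution over $S$ (a $(1-\gamma)$-weighted mixture of time-$t$ state distributions), so with $\rho_{s,a}=\tilde\rho_a$,
\[
\mathbb{E}_{(s,a)\sim\Lambda_z^{(\pi^{(c)})}}[\rho_{s,a}]=\sum_{s\in S}\sum_{a\in A}(D_z^{(\pi^{(c)})})_s\,c_a\,\tilde\rho_a=\sum_{a\in A}c_a\tilde\rho_a,
\]
which is independent of $z$. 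Hence $\pi^{(c)}\in\Pi_{\text{DP}}$. Intuitively, a state-independent policy induces the same action marginal for every subpopulation, and a state-independent agent reward only sees the action, so both subpopulations receive the same expected agent reward regardless of how their state distributions differ.

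For attainment I would take $\Pi$ to be the full stochastic-policy class $\prod_{s\in S}\Delta(A)$, which is compact. Writing $D_z^{(\pi)}=(1-\gamma)(I-\gamma P^{(\pi)})^{-1}D_z$ (valid since $\gamma<1$ and $P^{(\pi)}$ is stochastic, so $I-\gamma P^{(\pi)}$ is invertible and its entries are affine in $\pi$), the maps $\pi\mapsto D_z^{(\pi)}$, then $\pi\mapsto\Lambda_z^{(\pi)}$ (recall $(\Lambda_z^{(\pi)})_{s,a}=(D_z^{(\pi)})_s\pi_{s,a}$), and then the constraint function $g(\pi)=\langle\rho,\Lambda_{\text{maj}}^{(\pi)}\rangle-\langle\rho,\Lambda_{\text{min}}^{(\pi)}\rangle$ are all continuous. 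Thus $\Pi_{\text{DP}}=g^{-1}(\{0\})$ is a closed subset of the compact set $\Pi$, hence compact, and it is nonempty by the previous step; since $\pi\mapsto R^{(\pi)}=(1-\gamma)^{-1}\langle R,\Lambda^{(\pi)}\rangle$ is continuous, the extreme value theorem produces a maximizer, i.e., a solution of (\ref{eqn:problem}).

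I do not expect a genuine obstacle here; the statement is essentially a sanity check complementing Theorem~\ref{thm:nondeterministic}. The only points needing care are bookkeeping: verifying that $D_z$ and $D_z^{(\pi)}$ are genuine probability distributions (which needs each subpopulation to carry positive initial mass so that $D_z$ is well-defined), and observing that the theorem implicitly reads $\Pi$ as the unrestricted stochastic-policy class — for an arbitrary restricted $\Pi$ the claim can fail, consistently with Theorem~\ref{thm:nondeterministic}.
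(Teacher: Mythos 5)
Your proposal is correct and its core step --- exhibiting a constant policy $\pi_{s,a}=\tilde{\pi}_a$ and checking that state-independent agent rewards make it satisfy demographic parity --- is exactly the paper's (one-line) proof. The additional compactness/continuity argument for attainment of the maximum is a sensible elaboration that the paper leaves implicit, but it is not a different approach.
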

\begin{proof}
Any policy $\pi$ such that $\pi_{s,a}=\tilde{\pi}_a$ for all $s\in S$ and some $\tilde{\pi}\in\mathbb{R}^{|A|}$, satisfies $\pi\in\Pi_{\text{DP}}$.
\end{proof}

\textbf{Comparison to supervised learning.}
Our fairness definitions are natural generalizations of their counterparts for supervised learning. For example, in the supervised learning setting, demographic parity says that majority and minority members should, on average, be given positive outcomes at equal rates. Our extension to MDPs says that this property should hold on average across time---more precisely, averaged over $t\sim\text{Geometric}(\gamma)$, where $\gamma$ is the discount factor.

Conversely, our constraint reduces to the supervised learning constraint setting when the state distribution is constant over time---i.e., $D^{(\pi,t)}=D$ is independent of $t$ and $\pi$. To see this claim, note that a constant state distribution implies that $D^{(\pi)}=D$, so the state-action distribution is simply $\Lambda_{s,a}^{(\pi)}=D_s\pi_{s,a}$, and our MDP demographic parity constraint reduces to
\begin{align*}
\big|\mathbb{E}_{s\sim D_{\text{maj}},a\sim\pi_s}[\rho_{s,a}]-\mathbb{E}_{s\sim D_{\text{min}},a\sim\pi_s}[\rho_{s,a}]\big|\le\epsilon.
\end{align*}
In other words, the policy $\pi$ should equalize the expected individual rewards for the majority and minority subpopulations on the initial (constant) state distribution. Finally, assuming the individual rewards are $\rho_{s,a}=1$ for a positive outcome and $\rho_{s,a}=0$ otherwise, then our constraint is equivalently
\begin{align*}
\big|\mathbb{P}_{s\sim D_{\text{maj}},a\sim\pi_s}[\hat{y}=1]-\mathbb{P}_{s\sim D_{\text{min}},a\sim\pi_s}[\hat{y}=1]\big|\le\epsilon,
\end{align*}
where $\hat{y}=\rho_{s,a}$ is the outcome, which is demographic parity for supervised learning~\citep{hardt2016equality}.

Additionally, we introduce individual rewards $\rho$, which may differ from the decision maker rewards $R$. This distinction also appears in the supervised learning setting if the loss function for the decision maker (used in the learning objective) differs from the loss function of the individual (used in the fairness constraint). For example, $R$ may differ from $\rho$ if the decision maker is risk-averse; then, the decision maker may offer too few loans to minorities if there is less historical information available for minorities. We believe this distinction is particularly important to explicitly model in the MDP setting, since dynamical effects can magnify the negative consequences of unfair decision making.

\textbf{Importance of dynamics.}
Dynamics are important when current decisions do not immediately cause unfairness, but can affect the state distribution in a way that leads to unfair outcomes in the future. In our loan applicant example, there are two effects of decisions on the state distribution. First, there is a direct effect---e.g., denying loans can cause adverse outcomes on an applicant's financial situation. In $M_{\text{loan}}$, this effect is captured by the update $\beta_{t+1}=\beta_t+\tau$ when $a=0$---i.e., the applicant's probability of repaying future loans decreases when they are denied a loan.

The second effect is indirect, and is related to the selective labels problem in sequential decision making~\citep{lakkaraju2017selective,bechavod2019equal}. In particular, the bank only observes outcomes if they offer the applicant a loan. A key concern is that less historical information is available for minorities, leading to higher variance estimates of their ability to repay a loan. Thus, a risk-averse decision maker might conservatively deny loans to minorities, even if their expected rate of repaying loans is equal to that of majority members. The equal opportunity constraint forces the decision maker to give exploratory loans to avoid unfairly denying loans to an applicant for whom little data is available.

\section{Algorithm for Finite-State MDPs}
\label{sec:modelbased}

We describe an algorithm for solving (\ref{eqn:problem}), which has strong theoretical guarantees (i.e., it solves (\ref{eqn:problem}) exactly in polynomial time). On the other hand, it makes strong assumptions---i.e., that $M$ has finite state and action spaces. In Section~\ref{sec:general}, we describe a model-free algorithm that applies very generally (e.g., to continuous state and action spaces, or even non-separable MDPs), but lacks performance guarantees.

Our approach is based on the dual of the standard LP formulation of value iteration~\citep{altman1999constrained,sutton2018reinforcement}. In particular, the objective and first set of constraints of the LP in Algorithm~\ref{alg:modelbased} form the dual. The last set of constraints in the LP in Algorithm~\ref{alg:modelbased} encodes demographic parity. These constraints exploit the separable structure of the underlying MDP. In particular, the component $z$ of an initial state $s=(z,\tilde{s})$ does not change over time, so the value of $z$ for $s$ equals the value of $z$ for the initial state $s_0\sim D$. Thus, randomly sampling a state $s\sim D^{(\pi)}_z$ is equivalent to randomly sampling
\begin{align*}
s\sim D^{(\pi)}\mid\exists\tilde{s}\in\tilde{S}~.~s=(z,\tilde{s}).
\end{align*}
Expanding the conditional probability, the probability of sampling $s\sim D^{(\pi)}_z$ is
\begin{align*}
\frac{D^{(\pi)}_s\mathbb{I}[\exists\tilde{s}\in\tilde{S}~.~s=(z,\tilde{s})]}{p_z},\hspace{0.1in}\text{where}~p_z=\sum_{\tilde{s}\in\tilde{S}}D_{(z,\tilde{s})}.
\end{align*}
It follows that
\begin{align}
\label{eqn:separablepz}
\rho_z^{(\pi)}=\mathbb{E}_{(s,a)\sim\Lambda_z^{(\pi)}}[\rho_{s,a}]=p_z^{-1}\sum_{\tilde{s}\in\tilde{S}}\sum_{a\in A}\lambda_{(z,\tilde{s}),a}\rho_{s,a}.
\end{align}
The last set of constraints in the LP in Algorithm~\ref{alg:modelbased} uses (\ref{eqn:separablepz}) to encode demographic parity.
\begin{theorem}
\label{thm:modelbased}
Algorithm~\ref{alg:modelbased} returns a solution $\pi^*$ to (\ref{eqn:problem}) if and only if (\ref{eqn:problem}) is satisfiable.
\end{theorem}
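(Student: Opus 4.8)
The plan is to reduce (\ref{eqn:problem}) to the linear program in Algorithm~\ref{alg:modelbased} through the classical occupancy-measure correspondence for MDPs~\cite{altman1999constrained,sutton2018reinforcement}, and then to use separability to identify the last block of constraints with demographic parity. First I would recall that the set of discounted state--action occupancy measures of $M$ is exactly the polytope
\[
\Big\{\lambda\ge 0 : \textstyle\sum_{a\in A}\lambda_{s',a}=(1-\gamma)D_{s'}+\gamma\sum_{s\in S}\sum_{a\in A}\lambda_{s,a}P_{s,a,s'}\ \ (\forall s'\in S)\Big\},
\]
i.e.\ the ``flow'' constraints of the LP together with nonnegativity. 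Concretely: (i) for any $\pi$, $\lambda=\Lambda^{(\pi)}$ is nonnegative and satisfies the flow constraints, because $D^{(\pi)}=(1-\gamma)D+\gamma P^{(\pi)}D^{(\pi)}$; and (ii) conversely, given feasible $\lambda$, summing the flow constraints over $s'$ gives $\sum_{s,a}\lambda_{s,a}=1$, and defining $\pi_{s,a}=\lambda_{s,a}/\sum_{a'}\lambda_{s,a'}$ (with $\pi_s$ chosen arbitrarily when $\sum_{a'}\lambda_{s,a'}=0$) one checks $\Lambda^{(\pi)}=\lambda$. The only subtlety here, which I would spell out, is the zero-mass case: when $\sum_{a'}\lambda_{s,a'}=0$ then $\lambda_{s,a}=0$ for all $a$ since $\lambda\ge 0$, so $\pi_{s,a}\big(\sum_{a'}\lambda_{s,a'}\big)=\lambda_{s,a}$ holds regardless of the arbitrary choice, and this identity is exactly what is needed to verify that $d_s:=\sum_a\lambda_{s,a}$ solves the defining fixed-point equation of $D^{(\pi)}$. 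Under this correspondence the LP objective $(1-\gamma)^{-1}\sum_{s,a}\lambda_{s,a}R_{s,a}$ equals $R^{(\pi)}=(1-\gamma)^{-1}\langle R,\Lambda^{(\pi)}\rangle$, and the feasible set is bounded since $\sum_{s,a}\lambda_{s,a}=1$ and $\lambda\ge 0$.

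Next I would handle the fairness block. Separability is used precisely to obtain (\ref{eqn:separablepz}): writing $D=\sum_{z}p_zD_z$ and using that $P^{(\pi)}$ never changes the $z$-coordinate, the restriction of $D^{(\pi)}$ to the states $(z,\tilde s)$ equals $p_z\,D^{(\pi)}_z$, so $(D^{(\pi)}_z)_{(z,\tilde s)}=(D^{(\pi)})_{(z,\tilde s)}/p_z$, and hence $\mathbb{E}_{(s,a)\sim\Lambda_z^{(\pi)}}[\rho_{s,a}]=p_z^{-1}\sum_{\tilde s,a}\lambda_{(z,\tilde s),a}\rho_{(z,\tilde s),a}$ when $\lambda=\Lambda^{(\pi)}$. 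Consequently $\pi\in\Pi_{\text{DP}}$ iff the two numbers $p_z^{-1}\sum_{\tilde s,a}\lambda_{(z,\tilde s),a}\rho_{(z,\tilde s),a}$, $z\in\{\text{maj},\text{min}\}$, are equal, which is exactly the statement that some common value $c$ attains both---i.e.\ the last block of the LP. Combining with the previous paragraph, the LP in Algorithm~\ref{alg:modelbased} ranges over the same set of policies and has the same objective as (\ref{eqn:problem}), with $c$ playing a purely auxiliary role.

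Finally I would close the equivalence. If (\ref{eqn:problem}) is satisfiable, i.e.\ $\Pi_{\text{DP}}\neq\varnothing$, then the LP is feasible (take $\lambda=\Lambda^{(\pi)}$ for a fair $\pi$ and let $c$ be the common expectation) and, being a feasible LP with bounded objective, it attains its maximum at some $(\lambda^*,c^*)$; the policy $\pi^*$ extracted from $\lambda^*$ lies in $\Pi_{\text{DP}}$ and maximizes $R^{(\pi)}$ over $\Pi_{\text{DP}}$, hence solves (\ref{eqn:problem}). Conversely, if Algorithm~\ref{alg:modelbased} returns any policy then the LP was feasible, so by direction (ii) and the fairness equivalence $\Pi_{\text{DP}}\neq\varnothing$ and (\ref{eqn:problem}) is satisfiable. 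I expect the main obstacle to be not any single deep step but the careful assembly of the occupancy-measure dictionary---in particular the $\sum_{a'}\lambda_{s,a'}=0$ edge case in direction (ii) and the bookkeeping behind (\ref{eqn:separablepz})---while the remaining arguments are standard LP feasibility and attainment facts.
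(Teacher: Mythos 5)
Your proposal is correct and follows essentially the same route as the paper's proof: establishing the two-way correspondence between fair policies and feasible points of the LP via the flow (occupancy-measure) constraints, using separability to derive (\ref{eqn:separablepz}) and identify the last constraint block with demographic parity, and then concluding by LP optimality. Your explicit treatment of the $\sum_{a'}\lambda_{s,a'}=0$ edge case and of boundedness/attainment is a small refinement the paper glosses over, but the argument is the same.
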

We give a proof in Appendix~\ref{sec:modelbasedproof}. Note that Algorithm~\ref{alg:modelbased} runs in polynomial time.

\begin{remark}
\rm
We briefly compare our approach to algorithms for solving constrained MDPs. Existing approaches are also based on the dual of the LP for solving MDPs~\citep{altman1999constrained}. Indeed, in the LP we use in Algorithm~\ref{alg:modelbased}, the objective and the first constraint are taken from the dual. The second constraint, which encodes the fairness constraint, is novel---our key insight is that for separable MDPs, the fairness constraint can be expressed as a linear inequality over $\lambda$.
\end{remark}

\section{Algorithm for General MDPs}
\label{sec:general}

Next, we propose a general algorithm for solving (\ref{eqn:problem}). However, in general, the planning problem may be non-convex, so unlike Algorithm~\ref{alg:modelbased}, this algorithm may converge to a local optimum.

Our algorithm relies on the cross-entropy (CE) method~\citep{mannor2003cross,hu2012stochastic}, a heuristic for solving optimization problems. Suppose our policies $\pi_{\theta}\in\Pi$ are parameterized by $\theta\in\Theta$, and let a family $\mathcal{F}$ of probability distributions over $\Theta$ parameterized by $V\subseteq\mathbb{R}^d$. We use $\theta$ and $\pi_{\theta}$ interchangeably, e.g., $R^{(\theta)}=R^{(\pi_{\theta})}$. In the unconstrained setting, CE aims to solve the following optimization problem:
\begin{align}
\label{eqn:ceopt}
v^*=\operatorname*{\arg\max}_{v\in V}\mathbb{E}_v[R^{(\theta)}],
\end{align}
where $\mathbb{E}_v=\mathbb{E}_{\theta\sim f_v}$. In other words, it aims to compute a distribution $f_{v^*}$ that places high probability mass on $\theta$ with high cumulative expected reward $R^{(\theta)}$. Then, it returns a sample $\theta\sim f_{v^*}$.
To solve (\ref{eqn:ceopt}), CE starts with initial parameters $v_0\in V$. Then, on each iteration, it updates the current parameters $v_k$ to move ``closer'' to $v^*$. More precisely, the update is
\begin{align}
\label{eqn:ceest}
v_{k+1}&=\operatorname*{\arg\max}_{v\in V}D_{\text{KL}}(g_{k+1}\,\|\,f_v) \\
g_{k+1}(\theta')&=\alpha\frac{R^{(\theta')}\mathbb{I}[R^{(\theta')}\ge\gamma_k]f_{v_k}(\theta')}{\mathbb{E}_{v_k}[R^{(\theta)}\mathbb{I}[R^{(\theta)}\ge\gamma_k]]}+(1-\alpha)f_{v_k}(\theta') \nonumber
\end{align}
where $\gamma_k$ satisfies $\text{Pr}_{v_k}[R^{(\theta)}\ge\gamma_i]=\mu$. Here, $\alpha,\mu\in(0,1)$ are hyperparameters. Intuitively, the first term of $g_i$ upweights $\theta'$ with large values of $R^{(\theta')}$ compared to $f_{v_k}$, both by directly weighting the probability of $\theta'$ by $R^{(\theta')}$, and furthermore by placing zero probability mass on the bottom $1-\mu$ fraction of the $\theta'$. The second term of $g_k$ is a ``smoothing'' term that makes the update incremental.

\begin{algorithm}[tb]
\caption{Algorithm for general MDPs.}
\label{alg:modelfree}
\begin{algorithmic}[1]
\Procedure{GeneralLearnFairPolicy}{MDP $M$, Iters $r$, Parameter samples $n$, Top $n'$, Rollout samples $m$, Smoothing $\alpha$, Tolerance $\sigma$}
\State $\hat{\eta}\gets\vec{0}$
\For{$k\in[1,...,r]$}
\State Sample $\theta^{(1)},...,\theta^{(n)}\sim f_{m^{-1}(\hat{\eta})}$ \label{step:sample}
\For{$i\in[1,...,n]$}
\vspace{0.05in}
\State $\hat{R}^{(\theta^{(i)})}\xleftarrow{\sim m,T}R^{(\theta^{(i)})}$ \label{step:estimateg}
\State $\hat{\epsilon}^{(\theta^{(i)})}\xleftarrow{\sim m,T}|\rho_{\text{maj}}^{(\theta^{(i)})}-\rho_{\text{min}}^{(\theta^{(i)})}|$ \label{step:estimateh}
\EndFor
\State Sort $\{\theta^{(i)}\}_{i=1}^n$ in increasing $\hat{\epsilon}^{(\theta^{(i)})}$
\State $i'\gets$~Largest $i$ such that $\hat{\epsilon}^{(\theta^{(i)})}\le(1-\sigma)\epsilon$
\If{$n'\le i'$} \label{step:check_gamma}
\State Sort $\{\theta^{(i)}\}_{i=1}^{i'}$ in decreasing $\hat{R}^{(\theta^{(i)})}$ \label{step:quantile_end}
\EndIf
\State $\hat{\eta}\gets\alpha\cdot\frac{\frac{1}{n}\sum_{i=1}^{n'}\hat{R}^{(\theta^{(i)})}\Gamma(\theta^{(i)})}{\frac{1}{n}\sum_{i=1}^{n'}\hat{R}^{(\theta^{(i)})}}+(1-\alpha)\cdot\hat{\eta}$ \label{step:update_eta}
\EndFor
\If{$\hat{\epsilon}^{(\hat{\theta})}\le\tilde{\epsilon}$, where $\hat{\theta}\sim f_{m^{-1}(\hat{\eta})}$}
\State \textbf{return} $\pi_{\hat{\theta}}$
\Else
\State \textbf{return} $\varnothing$
\EndIf
\EndProcedure
\end{algorithmic}
\end{algorithm}

To enable efficient optimization of (\ref{eqn:ceest}), we assume that $\mathcal{F}$ is a (natural) exponential family.
\begin{definition}
\rm
A family $\mathcal{F}$ of distributions over $\Theta\subseteq\mathbb{R}^d$ is an \emph{exponential family} if, for a continuous $\Gamma:\Theta\to\mathbb{R}^d$, $f_v(\theta)=e^{v^{\top}\Gamma(\theta)}/Z(\theta)$, where $Z(\theta)=\int e^{v^{\top}\Gamma(\theta)}d\theta$.
\end{definition}
We use the standard choice that $\mathcal{F}$ is the space of Gaussians. If $\mathcal{F}$ is an exponential family, then
\begin{align}
\label{eqn:ceefupdate}
v_{k+1}&=m^{-1}(\eta_{k+1}) \\
\eta_{k+1}&=\alpha\frac{\mathbb{E}_{v_k}[R^{(\theta)}\mathbb{I}[R^{(\theta)}\ge\gamma_k]\Gamma(\theta)]}{\mathbb{E}_{v_k}[R^{(\theta)}\mathbb{I}[R^{(\theta)}\ge\gamma_k]]}+(1-\alpha)\eta_k \nonumber
\end{align}
where $m(v)=\mathbb{E}_v[\Gamma(\theta)]$ is the moment map~\citep{hu2012stochastic}. The CE algorithm approximates (\ref{eqn:ceefupdate}) by sampling rollouts $\zeta=((s_0,a_0),...,(s_{T-1},a_{T-1}))$ according to $\pi_{\theta}$. Then, it computes the estimate $R^{(\theta)}\approx\hat{R}^{(\theta)}=\frac{1}{m}\sum_{i=1}^m\hat{R}(\zeta^{(i)})$, where $\zeta^{(1)},...,\zeta^{(m)}$ are $m$ sampled rollouts and $\hat{R}(\zeta)=\sum_{t=0}^{T-1}\gamma^tR_{s_t,a_t}$.

To estimate $\eta_{k+1}$, it takes $n$ samples $\theta^{(1)},...,\theta^{(n)}\sim f_v$, and computes $\hat{R}^{(\theta^{(i)})}$ for each $i$. Then, it ranks $\theta^{(i)}$ in decreasing order of $\hat{R}^{(\theta^{(i)})}$, and discards all but the top $n'=\lceil n\mu\rceil$. It estimates the numerator in $\eta_{k+1}$ as
\begin{align*}
\mathbb{E}_{v_k}[R^{(\theta)}\mathbb{I}[R^{(\theta)}\ge\gamma_k]\Gamma(\theta)]\approx\frac{1}{n}\sum_{i=1}^{n'}\hat{R}^{(\theta^{(i)})}\Gamma(\theta^{(i)}).
\end{align*}
The denominator in $\eta_{k+1}$ is estimated similarly.

Algorithm~\ref{alg:modelfree} computes this estimate of the update (\ref{eqn:ceefupdate}) assuming the condition on Line 16 is satisfied (as we discuss below, the check is needed to enforce the constraint that $\pi\in\Pi_{\text{DP},\epsilon}$. Line 6 of Algorithm~\ref{alg:modelfree} computes the estimates $\hat{R}^{(\theta^{(i)})}$ for samples $\theta^{(i)}\sim f_{v_k}$ for $i\in[n]$, and Line 14 estimates $\eta_{k+1}$. On Line 6 \& 7, the notation $\xleftarrow{\sim m,T}$ means to estimate a quantity using $m$ sampled rollouts $\zeta^{(1)},...,\zeta^{(m)}$ each of length $T$.

Finally, we adapt constrained cross-entropy (CCE), which extends CE to handle constraints~\citep{wen2018constrained}, to handle fairness constraints. Intuitively, CCE prioritizes policies where the constraint that $\pi\in\Pi_{\text{DP},\epsilon}$ is closer to holding, unless the constraint holds, in which case CCE prioritizes policies with higher cumulative expected reward. In particular, Algorithm~\ref{alg:modelfree} imposes this constraint by checking if $\hat{\theta}$ satisfies the constraint $\hat{\epsilon}^{(\hat{\theta})}\le\epsilon$ in Line 16, where $\hat{\epsilon}^{(\hat{\theta})}$ is estimated from samples. Note that $\tilde{\epsilon}$ is used in place of $\epsilon$ to enforce the constraint even though $\hat{\epsilon}^{(\hat{\theta})}$ is inexact. The reason is that CCE relies on estimates $\hat{\epsilon}^{(\hat{\theta})}$ of $\epsilon^{(\hat{\theta})}$. These estimates are inexact since (i) they are estimated from samples, and (ii) they are estimated based on a finite time horizon (whereas $\epsilon^{(\hat{\theta})}$ is defined for an infinite horizon). To account for this error, we use $(1-\sigma)\epsilon$ (where $\sigma\in(0,1)$) in place of $\epsilon$ when checking the constraint on Line 16 of Algorithm~\ref{alg:modelfree}.

We provide the following for Algorithm~\ref{alg:modelfree} (see Appendix~\ref{sec:modelfreeproof} for a proof).
\begin{theorem}
\label{thm:modelfree}
Assume that $\rho_{\text{max}}$ is an upper bound on $\rho$ (i.e., $\|\rho\|_{\infty}=\rho_{\text{max}}$ for all $z\in Z$). Let $\delta\in\mathbb{R}_+$ and $\sigma\in(0,1/2]$ be given, and suppose that
\begin{align*}
m&\ge\frac{32\rho_{\text{max}}(1-\gamma)\log(4/\delta)}{\sigma\epsilon^2}\quad
T\ge\log\frac{4\rho_{\text{max}}}{\sigma^2\epsilon(1-\gamma)}.
\end{align*}
Then, with probability at least $1-\delta$, we have $\pi_{\hat{\theta}}\in\Pi_{\text{DP},\epsilon}$, where $\pi_{\hat{\theta}}$ is returned by Algorithm~\ref{alg:modelfree}.
\end{theorem}

\begin{figure*}[t]
\centering
\begin{tabular}{cccc}
\includegraphics[width=0.22\textwidth]{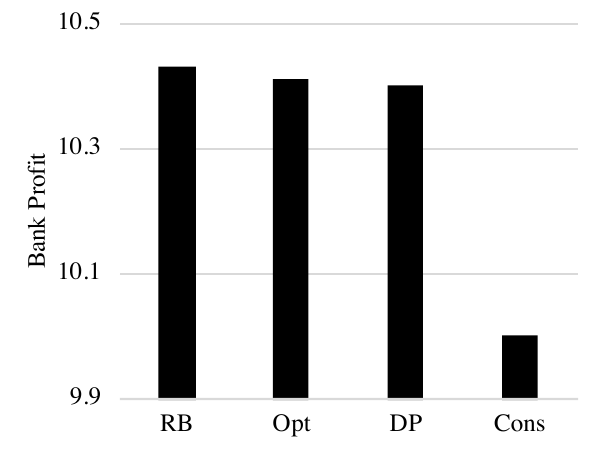} &
\includegraphics[width=0.22\textwidth]{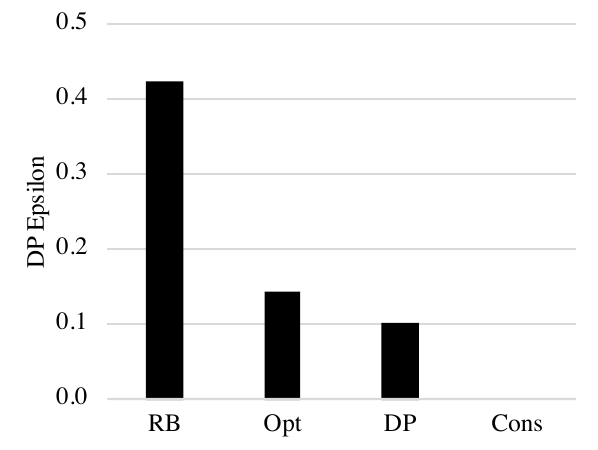} &
\includegraphics[width=0.22\textwidth]{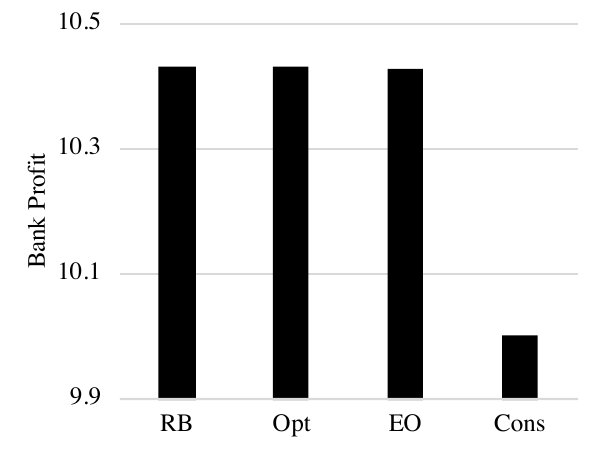} &
\includegraphics[width=0.22\textwidth]{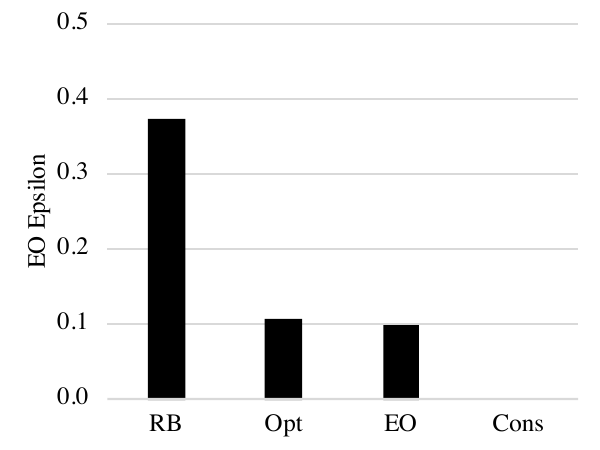} \\
(a) & (b) & (c) & (d)
\end{tabular}
\caption{Demographic parity (a) objective value, (b) constraint value, and equal opportunity (c) objective value, (d) constraint value, for race-blind (RB), demographic parity (DP) or equal opportunity (EO), optimistic (Opt), and conservative (Cons).}
\label{fig:exp}
\end{figure*}

\section{Reinforcement Learning}
\label{sec:rl}

We discuss extensions to the setting where the MDP is initially unknown, and the goal is to ensure fairness while learning these quantities. We propose an approach to fairness when the transitions $P$ are unknown but the initial state distribution $D$ is known; reducing to the case of unknown $D$ is standard (i.e., add a deterministic initial state $s_0$ and transition to an initial state according to $D$). Our goal is to ensure that with high probability, fairness holds for all time including during learning. We consider the episodic case where the system is reset after a fixed number of steps $T$, and take $\gamma=1$. That is, a finite sequence of interactions is performed repeatedly---e.g., each new loan applicant is a new episode. We assume there are a fixed total number of episodes $N$, and the goal is to perform well on average; the doubling trick can be used to generalize to unknown or unbounded $N$ (see p. 99 of \cite{lattimore2018bandit}).

A key challenge is how to design a fair policy we can use when the dynamics are unknown. Thus, we focus on the setting of state-independent individual rewards $\rho$, where we can ensure such a policy exists. In particular, we take $\pi_0$ to choose actions uniformly randomly---i.e., $\pi_0(s,a)=1/|A|$ for all $s\in S$ and $a\in A$. Then, we are guaranteed that $\pi_0$ is fair. Furthermore, we are guaranteed that $\pi_0$ explores all states (assuming without loss of generality that we prune unreachable states)---i.e., letting $D^{(\pi)}=\frac{1}{T}\sum_{t=0}^{T-1}D^{(\pi,t)}$ and $\Lambda^{(\pi)}_{s,a}=D^{(\pi)}_s\pi_{s,a}$, where $D^{(\pi,t)}$ is defined as before, then there exists $\lambda_0\in\mathbb{R}_+$ such that
\begin{align*}
\Lambda^{(\pi_0)}_{s,a}\ge\lambda_0>0\hspace{0.2in}(\forall s\in S,~a\in A)
\end{align*}

We use explore-then-commit~\citep{lattimore2018bandit}. First, we explore using the conservative policy $\pi_0$ for $N_0$ episodes. Then, we estimate $P$ using the observed state-action-state tuples $(s,a,s')$ (i.e., transition to $s'$ upon taking action $a$ in state $s$):
\begin{align*}
\hat{P}_{s,a,s'}=\frac{\#\text{ observed tuples }(s,a,s')}{\#\text{ observed tuples }(s,a,s'')\text{ for some }s''\in S}.
\end{align*}
Finally, for the remaining $N-N_0$, it uses the optimal policy $\hat{\pi}$ computed as if $\hat{P}$ is the true transition matrix.

We prove a bound on the \emph{regret}
\begin{align*}
\mathcal{R}(N)=\mathbb{E}\left[\sum_{n=1}^NR^{(\pi^*)}-R^{(\pi_n)}\right],
\end{align*}
where the expectation is taken over the randomness of the observed tuples $(s,a,s')$, $\pi^*$ is the optimal policy for known $P$ that satisfies $\pi^*\in\Pi_{\text{DP},\epsilon/4}$, and
\begin{align*}
\pi_n&=\begin{cases}\pi_0&\text{if}~n\le N_0\\\hat{\pi}&\text{otherwise}\end{cases} \\
N_0&=\frac{128T^4\cdot|S|^2\cdot R_{\text{max}}^2\cdot\log(2|S|^2|A|/\delta)}{\lambda_0^2\tilde{\epsilon}^2}.
\end{align*}
is the policy our algorithm uses on episode $n$. We show that $\hat\pi$ is fair, and that given $\delta\in\mathbb{R}_+$, $\pi_n\in\Pi_{\text{DP},\epsilon}$ for every $n\in[N]$ with probability at least $1-\delta$.
\begin{theorem}
\label{thm:unknowntrans}
Let $\epsilon,\delta\in\mathbb{R}_+$ be given. Assume that $R_{\text{max}}$ is an upper bound on $R$ (i.e., $\|R\|_{\infty}=R_{\text{max}}$) and on $\rho$. Let $\hat{M}=(S,A,D,\hat{P},R,T)$, and $\hat{\pi}$ be the optimal policy for $\hat{M}$ in $\hat{\Pi}_{\text{DP},\epsilon/2}$ (i.e., the set of policies satisfying demographic parity for $\hat{M}$). Let $M=(S,A,D,P,R,T)$, and $\pi^*$ be optimal for $M$ in $\Pi_{\text{DP},\epsilon/4}$. Then, $\hat{\pi}\in\Pi_{\text{DP},\epsilon}$, and $\mathcal{R}(N)=O((N^{2/3}+1/\epsilon^2)\log(1/\delta))$ with probability at least $1-\delta$.
\end{theorem}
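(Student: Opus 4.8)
The plan is to reduce the statement to two standard ingredients and then chain them through the tolerance ladder $\epsilon/4\to\epsilon/2\to\epsilon$. First, a concentration argument shows that after the $N_0$ exploration episodes the estimated kernel $\hat P$ is uniformly close to $P$ on all visited state--action pairs; second, a simulation lemma converts kernel closeness into closeness of the cumulative reward $R^{(\pi)}$ and of each conditional agent reward $\rho_z^{(\pi)}$, \emph{uniformly over all policies $\pi$}. Since $\hat\pi$ is computed entirely from the exploration data and then merely evaluated against the true $M$, no further randomness enters, so a single high-probability event suffices (in particular, there is no independent-rollout subtlety as in Theorem~\ref{thm:modelfree1}).

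\textbf{Step 1 (coverage and estimation error).} Each episode under $\pi_0$ visits a given $(s,a)$ at most $T$ times, with expectation $\sum_{t=0}^{T-1}\Lambda^{(\pi_0,t)}_{s,a}=T\,\Lambda^{(\pi_0)}_{s,a}\ge T\lambda_0$, and episodes are i.i.d. A Hoeffding bound together with a union bound over the $|S||A|$ pairs shows that, for the stated $N_0$, every $(s,a)$ is visited $\Omega(N_0T\lambda_0)$ times with probability $\ge 1-\delta/2$. Conditioned on the visit counts, the recorded next states for a fixed $(s,a)$ are i.i.d.\ draws from $P_{s,a,\cdot}$, so another Hoeffding bound per coordinate, union-bounded over all $|S|^2|A|$ triples $(s,a,s')$, yields $\max_{s,a}\|\hat P_{s,a,\cdot}-P_{s,a,\cdot}\|_1\le\alpha$ with probability $\ge 1-\delta/2$; the value of $N_0$ in the statement is exactly what is needed to force $\alpha\le\epsilon/(8R_{\text{max}}T^2)$. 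This is the step in which $R_{\text{max}}$, $\lambda_0$, $|S|$, $T$ and $\log(2|S|^2|A|/\delta)$ all enter.

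\textbf{Step 2 (simulation lemma).} For any $\pi$, writing $D^{(\pi,t)}-\hat D^{(\pi,t)}=P^{(\pi)}(D^{(\pi,t-1)}-\hat D^{(\pi,t-1)})+(P^{(\pi)}-\hat P^{(\pi)})\hat D^{(\pi,t-1)}$, using that the state-transition operator $P^{(\pi)}$ is nonexpansive in $\ell_1$ and that $\|(P^{(\pi)}-\hat P^{(\pi)})u\|_1\le\alpha\|u\|_1$, induction on $t$ gives $\|D^{(\pi,t)}-\hat D^{(\pi,t)}\|_1\le t\alpha$ (and identically with the initial distribution $D$ replaced by its sensitive-attribute restriction $D_z$, which is the same for $M$ and $\hat M$). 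Since $\|\Lambda^{(\pi,t)}-\hat\Lambda^{(\pi,t)}\|_1=\|D^{(\pi,t)}-\hat D^{(\pi,t)}\|_1$, summing over $t=0,\dots,T-1$ and using $\|R\|_\infty,\|\rho\|_\infty\le R_{\text{max}}$ yields $|R^{(\pi)}-\hat R^{(\pi)}|\le\tfrac12R_{\text{max}}T^2\alpha$ and $|\rho_z^{(\pi)}-\hat\rho_z^{(\pi)}|\le\tfrac12R_{\text{max}}T^2\alpha$ for each $z\in Z$; by Step 1 each of these is at most $\epsilon/16$.

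\textbf{Step 3 (transfer).} Work on the good event, which has probability $\ge 1-\delta$. Because $\pi^*\in\Pi_{\text{DP},\epsilon/4}$, two applications of the triangle inequality with the $\rho_z$ bound give $|\hat\rho_{\text{maj}}^{(\pi^*)}-\hat\rho_{\text{min}}^{(\pi^*)}|\le\epsilon/4+2\cdot\epsilon/16\le\epsilon/2$, so $\pi^*\in\hat\Pi_{\text{DP},\epsilon/2}$; in particular this set is nonempty (and closed, so the argmax defining $\hat\pi$ is attained), and $\hat R^{(\hat\pi)}\ge\hat R^{(\pi^*)}$. Running the inequality the other way, $\hat\pi\in\hat\Pi_{\text{DP},\epsilon/2}$ forces $|\rho_{\text{maj}}^{(\hat\pi)}-\rho_{\text{min}}^{(\hat\pi)}|\le\epsilon/2+2\cdot\epsilon/16\le\epsilon$, i.e.\ $\hat\pi\in\Pi_{\text{DP},\epsilon}$. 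Finally $R^{(\pi^*)}-R^{(\hat\pi)}=(R^{(\pi^*)}-\hat R^{(\pi^*)})+(\hat R^{(\pi^*)}-\hat R^{(\hat\pi)})+(\hat R^{(\hat\pi)}-R^{(\hat\pi)})\le\epsilon/16+0+\epsilon/16\le\epsilon$. (For the auxiliary claim that $\pi_n\in\Pi_{\text{DP},\epsilon}$ for every $n$, the episodes $n\le N_0$ use $\pi_0$, which lies in $\Pi_{\text{DP}}\subseteq\Pi_{\text{DP},\epsilon}$ by Theorem~\ref{thm:conservative}.) I expect the only genuine obstacle to be the bookkeeping in Step 1: the $T^2$ amplification coming out of the simulation lemma must be absorbed while spending only $N_0$ exploration episodes, and this is precisely what pins down the quadratic dependence on $T$, $|S|$, $R_{\text{max}}$, $\lambda_0^{-1}$ and $\epsilon^{-1}$; everything after that is routine triangle-inequality chaining.
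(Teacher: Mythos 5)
Your three-step architecture---concentration of $\hat P$ after the exploration phase, a simulation lemma converting kernel error into value error uniformly over policies, and the tolerance-ladder chaining $\epsilon/4\to\epsilon/2\to\epsilon$---is exactly the paper's, and your Step 3 reproduces the paper's final argument essentially verbatim (including the observation that $\pi^*\in\hat\Pi_{\text{DP},\epsilon/2}$ so that $\hat R^{(\hat\pi)}\ge\hat R^{(\pi^*)}$). The genuine gap is quantitative and sits at the interface of Steps 1 and 2. Your simulation lemma is the standard telescoping one, $\|D^{(\pi,t)}-\hat D^{(\pi,t)}\|_1\le t\alpha$ with $\alpha=\max_{s,a}\|\hat P_{s,a,\cdot}-P_{s,a,\cdot}\|_1$, giving $|R^{(\pi)}-\hat R^{(\pi)}|\le\tfrac12R_{\text{max}}T^2\alpha$. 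To push this below $\epsilon/16$ you need $\alpha\lesssim\epsilon/(R_{\text{max}}T^2)$, hence per-entry accuracy of order $\epsilon/(R_{\text{max}}T^2|S|)$, hence by Hoeffding on the order of $R_{\text{max}}^2T^4|S|^2/\epsilon^2$ next-state samples per $(s,a)$; even crediting $T\lambda_0$ visits per episode this forces $N_0\gtrsim R_{\text{max}}^2T^3|S|^2/(\lambda_0\epsilon^2)$. The $N_0$ in the statement scales as $T^2$, so your claim that it is ``exactly what is needed to force $\alpha\le\epsilon/(8R_{\text{max}}T^2)$'' does not check out; as written your argument only proves the theorem with a strictly larger exploration budget.

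For context, the paper avoids the extra factor of $T$ by bounding $\|(P^{(\pi)})^tD-(\hat P^{(\pi)})^tD\|_\infty$ by $\|P^{(\pi)}-\hat P^{(\pi)}\|_\infty^t\,\|D\|_\infty$ and summing, obtaining $T\cdot|S|\cdot R_{\text{max}}\cdot\epsilon_0$ with $\epsilon_0$ an entrywise bound; that matrix-power step is not a valid inequality ($A^t-B^t\ne(A-B)^t$), so your telescoping version is the more defensible simulation lemma---but then the constants in the theorem statement do not come out, and you must either accept the larger $N_0$ or find a tighter argument. Two smaller points: your assertion that, conditioned on the visit counts, the recorded next states for a fixed $(s,a)$ are i.i.d.\ draws from $P_{s,a,\cdot}$ glosses over the dependence of successive within-episode visits (the paper sidesteps this by counting at most one observation per episode, at the cost of only $N_0\lambda_0/2$ samples per pair); and your parenthetical that $\pi_0\in\Pi_{\text{DP}}$ covers the exploration episodes is correct and matches the paper's framing of the surrounding regret claim.
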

We give a proof in Appendix~\ref{sec:thmunknowntransproof}. Note that there is a gap between the fairness constraint of $\pi^*$ (which is in $\Pi_{\text{DP},\epsilon/4}$) and that of $\hat{\pi}$ (which is only in $\Pi_{\text{DP},\epsilon}$)---i.e., we can only guarantee performance compared to a policy that satisfies a stricter level of fairness.

\section{Experiments}
\label{sec:exp}

We run simulations using our loan example from Section~\ref{sec:problem}. We estimated parameters based on FICO score data~\citep{hardt2016equality}. We consider Whites to be majorities, and Blacks, Hispanics, and Asians to be minorities. For the initial distribution $p_0$, we first fit parameters the parameters of the prior $\text{Beta}(\alpha_z,\beta_z)$ based on the data. Then, we take a fixed number of steps $T_z$ using action $a=1$ (i.e., offer loan) to force exploration. We choose $T_{\text{maj}}>T_{\text{min}}$ to capture the idea that less data is available for minorities. We also estimate the probability $p_Z$ of being a minority from the data. Similar to~\citep{hardt2016equality}, we choose $I$ so the bank makes a profit on the average applicant. We manually choose $\lambda$, $\tau$, $T_{\text{maj}}$, and $T_{\text{min}}$ based on intuition; see Appendix~\ref{sec:expappendix} for the values we chose. We focus on evaluation of Algorithm~\ref{alg:modelfree}, and give additional experimental results in Appendix~\ref{sec:expappendix}.

\textbf{Baselines that ignore dynamics.}
To demonstrate the importance of accounting for dynamics, we compare to two baselines that ignore dynamics when constraining fairness. The first optimistically pretends that actions do not affect the state distribution---i.e., $D^{(\pi,t)}$ does not change over time. In this case, for all $t>0$, we have $D^{(\pi,t)}=D$, so $D^{(\pi)}=D$ for any $\pi$. Thus, we can let
\begin{align}
\label{eqn:optimisticopt}
\pi^*=&\operatorname*{\arg\max}_{\pi\in\Pi,c\in\mathbb{R}}R^{(\pi)} \\
&\operatorname*{subj. to}~\mathbb{E}_{s\sim D_z}\left[\sum_{a\in A}\pi_{s,a}\rho_{s,a}\right]=c\hspace{0.2in}(\forall z\in Z), \nonumber
\end{align}
where $D_z=D\mid\exists\tilde{s}\in\tilde{S}~.~s_0=(z,\tilde{s})$. We can solve (\ref{eqn:optimisticopt}) using a straightforward modification of Algorithm~\ref{alg:modelfree}. This captures the supervised learning setting. Compared to our algorithm, this algorithm may learn a policy that is unfair but achieves higher reward.

The second conservatively assumes $D^{(\pi,t)}$ can change arbitrarily on each step. This baseline learns a fair policy, but it may achieve much lower reward. In this case, we restrict to policies $\pi$ that satisfy
\begin{align}
\label{eqn:conservativeassump}
\mathbb{E}_{s\sim D_{\text{maj}}'}\left[\sum_{a\in A}\pi_{s,a}\rho_{s,a}\right]=\mathbb{E}_{s\sim D_{\text{min}}'}\left[\sum_{a\in A}\pi_{s,a}\rho_{s,a}\right] \\
\hspace{2.0in}(\forall D'\in\Delta^{|S|}), \nonumber
\end{align}
where $D_z'=D'\mid\exists\tilde{s}\in\tilde{S}~.~s=(z,\tilde{s})$, and $\Delta^n$ is the standard $n$-simplex. Note that $D_z'$ is conditioned on $s=(z,\tilde{s})$ (i.e., the current state has sensitive attribute $z$) instead of $s_0=(z,\tilde{s})$ (i.e., the initial state has sensitive attribute $z$); if $M$ is separable, these two conditions are equivalent. Finally, note that $D'_z$ is undefined if the conditional has zero probability according to $D'$; we implicitly omit such $D'$ from (\ref{eqn:conservativeassump}).

The difficulty with (\ref{eqn:conservativeassump}) is the universal quantification over $D'\in\Delta^{|S|}$. For state-independent individual rewards, the conservative assumption is in fact equivalent to optimizing over state-independent policies---i.e., those of the form $\pi_{s,a}=\tilde{\pi}_a$, where $\tilde{\pi}\in\mathbb{R}^{|A|}$. Thus, we can apply a modified version of Algorithm~\ref{alg:modelfree} where we only learn state-independent policies.

\textbf{Results for Algorithm~\ref{alg:modelfree}.}
We ran Algorithm~\ref{alg:modelfree} to learn fair policies for both the demographic parity and equal opportunity constraints, using $\epsilon=0.1$. For each constraint, we also use our optimistic and conservative baselines. We also consider a race-blind algorithm that is unconstrained but where $\pi$ ignores the sensitive attribute $z\in Z$. The optimal policy is race-blind---the state is a sufficient statistic, so it captures all information needed to determine whether to offer a loan.

For demographic parity, Figure~\ref{fig:exp} (a) shows the reward achieved for the bank, and (b) shows the value of the fairness constraint---i.e., the smallest value of $\epsilon$ for which $\pi\in\Pi_{\text{DP},\epsilon}$. As expected, race-blind achieves the highest reward (10.43), followed by the optimistic algorithm (10.41), and then Algorithm~\ref{alg:modelfree} (10.40). Finally, the conservative algorithm performs substantially worse than the others (10.00). However, race-blind achieves a very poor constraint value (0.42), as does the optimistic algorithm (0.14), which performs performs 43\% worse than Algorithm~\ref{alg:modelfree} (0.10). The conservative algorithm achieves constraint value 0. For equal opportunity, Figure~\ref{fig:exp} (c) shows the bank reward, and (d) shows the value of the constraint. The bank's rewards are essentially the same for the race-blind algorithm, optimistic algorithm, and Algorithm~\ref{alg:modelfree} (10.43), but is substantially worse for the conservative algorithm (10.00). As with demographic parity, the constraint value for race-blind (0.37) is substantially worse than the others, but in this case optimistic (0.11) is fairly close to Algorithm~\ref{alg:modelfree} (0.10). The conservative algorithm achieves constraint value 0.

\textbf{Discussion.}
Our results show that imposing demographic parity slightly reduces the bank's reward, but substantially increases fairness compared to the race-blind and optimistic algorithms. The latter models supervised learning---thus, our results show the importance of accounting for dynamics when ensuring fairness. We find similar (but weaker) trends for equal opportunity. Like prior work~\citep{hardt2016equality}, we find that demographic parity reduces the bank's rewards more than equal opportunity.

Unlike the static case~\citep{hardt2016equality}, our model has dynamic parameters. Time series data would be needed to estimate them; instead, we choose them manually. Also, \citep{hardt2016equality} uses the empirical CDF of the distribution over repayment probabilities $p_0$, whereas we assumed $p_0$ is a Beta distribution. Our goal is to understand the consequences of ignoring dynamics, not to study a real-world scenario.

\section{Conclusion}

We have proposed algorithms to learn fair policies that account for the dynamical effects, and have demonstrated the importance of accounting for these effects. There is much room for future work. One important direction is extending our results for the case where the initial MDP is unknown beyond explore-then-commit to obtain better regret guarantees. Another direction is theoretically analyzing the cost of fairness---e.g., what is the cost to the bank for imposing a fairness constraint, and how they can mitigate this cost by improving predictive power. Finally, reinforcement learning problems in practice are often offline---i.e., the goal is to learn from historical data and the algorithm does not have the opportunity to explore. Studying fairness in this context is an important problem.

\subsubsection*{Acknowledgements}

We thank the anonymous reviewers for their insightful comments. This research was partially supported by NSF awards 1910769 and 1652113.

\bibliography{paper}
\bibliographystyle{plainnat}

\clearpage
\appendix
\section{Batch Decisions}
\label{sec:appendixdiscussion}

We describe how our algorithm can be extended to the setting where the decision-maker makes decisions about batches of individuals jointly rather than one individual at a time. For instance, a bank might decide on a portfolio of loans to target at once rather than decide independently for each individual. The challenge is ensuring decisions are fair not just across different batches of individuals, but also across individuals within a batch, since these decisions may be correlated.

In this setting, the state space becomes $S'=S^k=(Z\times\tilde{S})^k$, where there are $k$ individuals and $S$ is the state space of individual $i$. The action space is $A'=A^k=\{0,1\}^k$---i.e., a binary decision for each individual. Then, the state-action distribution is $\lambda\in\mathbb{R}^{|S'|\times|A'|}$, with a component $\lambda_{s,a}=\lambda_{(s_1,...,s_k),(a_1,...,a_k)}$ for each state $s=(s_1,...,s_k)\in S'$ and action $a=(a_1,...,a_k)\in A'$. The policy $\pi$ can simultaneously make decisions for all $k$ individuals. We let the individual rewards for individual $i$ be $\rho^i\in\mathbb{R}^{|S'|\times|A'|}$. Then, the natural generalization of our fairness constraint is that decisions should be fair on average across both the initial state distribution and across individuals in a single batch. For instance, demographic parity says that
\begin{align*}
\left|\frac{1}{k}\sum_{i=1}^k\rho^{(\pi)}_{i,\text{maj}}-\frac{1}{k}\sum_{i=1}^k\rho^{(\pi)}_{i,\text{min}}\right|\le\epsilon,
\end{align*}
where
\begin{align*}
\begin{array}{l}
\rho^{(\pi)}_{i,z}=\mathbb{E}_{(s,a)\sim\Lambda_{i,z}^{(\pi)}}[\rho_{s,a}^i] \\
\Lambda_{i,z}^{(\pi)}=\Lambda^{(\pi)}\mid\exists\tilde{s}_i\in\tilde{S}~.~s_0=(...,(z,\tilde{s}_i),...),
\end{array}
\end{align*}
This constraint can be encoded in our linear program in Algorithm 1 by replacing the second constraint with the following (the objective and first constraint remain the same, except with $S$ replaced by $S'$ and $A$ replaced by $A'$):
\begin{align*}
|X_{\text{maj}}-X_{\text{min}}|\le\epsilon
\end{align*}
where
\begin{align*}
&X_z= \\
&\frac{1}{k}\sum_{i=1}^kp_{i,z}^{-1}\sum_{s_j\in S_j:j\neq i}\sum_{\tilde{s}_i\in\tilde{S}}\sum_{a\in A'}\lambda_{(...,(z,\tilde{s}_i),...),a}\rho_{(...,(z,\tilde{s}_i),...),a},
\end{align*}
and where $p_{i,z}$ is a normalizing constant similar to $p_z$ in the original constraint. Intuitively, this constraint is the same as the original one except that we marginalize over all other individuals (i.e., the sums over $s_j$ for $j\neq i$), and then we average over individuals $i$ as in our fairness constraint. This constraint can similarly be incorporated into Algorithm~\ref{alg:modelfree}. Finally, while this MDP has number of states exponential in the number of individuals $k$, this blowup is inevitable since the policy is allowed to make complex decisions based on the states of all individuals.

\section{Proof of Theorem~\ref{thm:nondeterministic}}
\label{sec:nondeterministicproof}

For the first claim, consider the MDP $M$. The states are $s_0,s_1,s_2,s_3,s_4\in\tilde{S}\times Z$, where:
\begin{align*}
s_0&=(0,\text{maj}) \\
s_1&=(1,\text{maj}) \\
s_2&=(0,\text{min}) \\
s_3&=(1,\text{min}) \\
s_4&=(2,\text{min}).
\end{align*}
The actions are $A=\{0,1\}$. The transitions are
\begin{align*}
P_{s_0,a,s_1}&=1 \\
P_{s_1,a,s_1}&=1 \\
P_{s_2,a,s_3}&=\mathbb{I}[a=0] \\
P_{s_2,a,s_4}&=\mathbb{I}[a=1] \\
P_{s_3,s_3}&=1 \\
P_{s_4,s_4}&=1
\end{align*}
for all $a\in A$. The initial distribution is
\begin{align*}
&D_{s_0}=D_{s_2}=\frac{1}{2} \\
&D_{s_1}=D_{s_3}=D_{s_4}=0.
\end{align*}
The discount factor is $\gamma=\frac{1}{2}$. The individual rewards are
\begin{align*}
\rho_{s_0,a}&=0 \\
\rho_{s_1,a}&=1 \\
\rho_{s_2,a}&=0 \\
\rho_{s_3,a}&=0 \\
\rho_{s_4,a}&=2,
\end{align*}
for all $a\in A$. Let $\pi:S\to A$ be a deterministic policy. It is clear that the only value of $\pi$ that matters is $\pi(s_2)$. Conditioned on $z=\text{maj}$, regardless of $\pi$, the expected cumulative individual reward is
\begin{align*}
\mathbb{E}_{(s,a)\sim\Lambda_{\text{maj}}^{(\pi)}}[\rho_{s,a}]
&=\left(1-\frac{1}{2}\right)\sum_{t=1}^{\infty}\frac{1}{2^t} \\
&=\frac{1}{2}.
\end{align*}
Conditioned on $z=\text{min}$, if $\pi(s_2)=0$, then
\begin{align*}
\mathbb{E}_{(s,a)\sim\Lambda_{\text{min}}^{(\pi)}}[\rho_{s,a}]&=
\begin{cases}
0&\text{if}~\pi(s_2)=0 \\
1&\text{if}~\pi(s_2)=1.
\end{cases}
\end{align*}
Thus, for $\epsilon<\frac{1}{2}$, it is impossible for the demographic parity constraint to be satisfied.

However, consider the stochastic policy
\begin{align*}
\pi_{s_2,0}=\pi_{s_2,1}=\frac{1}{2}.
\end{align*}
Then, 
\begin{align*}
\mathbb{E}_{(s,a)\sim\Lambda_{\text{min}}^{(\pi)}}[\rho_{s,a}]=\frac{1}{2},
\end{align*}
so this policy satisfies the demographic parity constraint.

For the second claim, consider the same MDP, except where
\begin{align*}
\rho_{s_4,a}=0
\end{align*}
for all $a\in A$. Then, it is clear that
\begin{align*}
\mathbb{E}_{(s,a)\sim\Lambda_{\text{min}}^{(\pi)}}[\rho_{s,a}]=0
\end{align*}
regardless of $\pi$. Thus, for $\epsilon<\frac{1}{2}$, the demographic parity constraint cannot be satisfied---i.e., $\Pi_{\text{DP},\epsilon}=\varnothing$. $\qed$

\section{Proof of Theorem~\ref{thm:modelbased}}
\label{sec:modelbasedproof}

Our proof proceeds in three steps. First, we show that any feasible point of the LP in Algorithm~\ref{alg:modelbased} is the state-action distribution $\Lambda^{(\pi)}$ for some policy $\pi\in\Pi_{\text{DP}}$. Second, we show that conversely, for any fair policy $\pi\in\Pi_{\text{DP}}$, the state-action distribution $\Lambda^{(\pi)}$ is a feasible point of the LP. Finally, we combine these two results to prove the theorem.

\paragraph{Step 1.}

Let $\pi\in\Pi_{\text{DP}}$ be any policy satisfying demographic parity. Then, we claim that the state-action distribution $\Lambda^{(\pi)}$ is a feasible point of the LP in Algorithm~\ref{alg:modelbased}.

First, we show that $\Lambda^{(\pi)}$ satisfies the first constraint
\begin{align*}
\sum_{a\in A}\Lambda^{(\pi)}_{s',a}=(1-\gamma)D_{s'}+\gamma\sum_{s\in S}\sum_{a\in A}\Lambda^{(\pi)}_{s,a}P_{s,a,s'}
\end{align*}
for each $s'\in S$.

To this end, note that by induction,
\begin{align*}
D^{(\pi,t)}=(P^{(\pi)})^tD,
\end{align*}
so
\begin{align}
\label{eqn:alg1proof:dformula1}
D^{(\pi)}=(1-\gamma)\left[\sum_{t=0}^{\infty}(\gamma P^{(\pi)})^t\right]D.
\end{align}
Multiplying each side of (\ref{eqn:alg1proof:dformula1}) by $I-\gamma P^{(\pi)}$ (where $I$ is the $|S|\times|S|$ identity matrix), we have
\begin{align*}
&(I-\gamma P^{(\pi)})D^{(\pi)} \\
&=(1-\gamma)\left[\sum_{t=0}^{\infty}(\gamma P^{(\pi)})^t-\sum_{t=1}^{\infty}(\gamma P^{(\pi)})^t\right]D \\
&=(1-\gamma)\cdot D.
\end{align*}
Note that these algebraic manipulations are valid since the eigenvalues of $\gamma P^{(\pi)}$ are bounded in norm by $\gamma<1$, so all sums converge absolutely. Rearranging this equality gives
\begin{align}
\label{eqn:alg1proof:dformula2}
D^{(\pi)}=(1-\gamma)D+\gamma P^{(\pi)}D^{(\pi)}.
\end{align}
It follows that
\begin{align*}
\sum_{a\in A}\Lambda^{(\pi)}_{s',a}=(1-\gamma)D_{s'}+\gamma\sum_{s\in S}\sum_{a\in A}\Lambda^{(\pi)}_{s,a}P_{s,a,s'}
\end{align*}
for each $s'\in S$, where we have used the equalities
\begin{align*}
D^{(\pi)}_{s'}=\sum_{a\in A}\Lambda^{(\pi)}_{s',a}
\end{align*}
and
\begin{align*}
(P^{(\pi)}D^{(\pi)})_{s'}&=\sum_{s\in S}P^{(\pi)}_{s,s'}D^{(\pi)}_s \\
&=\sum_{s\in S}\sum_{a\in A}P_{s,a,s'}\pi_{s,a}D^{(\pi)}_s \\
&=\sum_{s\in S}\sum_{a\in A}P_{s,a,s'}\Lambda^{(\pi)}_{s,a}
\end{align*}
that follow from the definition of $\Lambda^{(\pi)}$. Therefore, $\Lambda^{(\pi)}$ satisfies the first constraint.

Next, we show that $\Lambda^{(\pi)}$ satisfies the second constraint, which says that
\begin{align}
\label{eqn:alg1proof:secondconstraint2}
&\bigg|~p_{\text{maj}}^{-1}\sum_{\tilde{s}\in\tilde{S}}\sum_{a\in A}\lambda_{(\text{maj},\tilde{s}),a}\rho_{(\text{maj},\tilde{s}),a} \\
&\hspace{0.1in}-p_{\text{min}}^{-1}\sum_{\tilde{s}\in\tilde{S}}\sum_{a\in A}\lambda_{(\text{min},\tilde{s}),a}\rho_{(\text{min},\tilde{s}),a}~\bigg|\le\epsilon. \nonumber
\end{align}
In particular, note that
\begin{align*}
D^{(\pi)}_z=D^{(\pi)}\mid\exists\tilde{s}\in\tilde{S}~.~s=(z\tilde{s}),
\end{align*}
since the value of $z$ for $s$ equals the value of $z$ for the initial state $s_0\sim D$. Furthermore, the probability of sampling $s\sim D^{(\pi)}\mid\exists\tilde{s}\in\tilde{S}~.~s=(z,\tilde{s})$ is
\begin{align*}
\frac{D^{(\pi)}_s\mathbb{I}[\exists\tilde{s}\in\tilde{S}~.~s=(z,\tilde{s})]}{p_z}.
\end{align*}
Together with the definition of $\Lambda^{(\pi)}_z$, we have
\begin{align*}
(\Lambda^{(\pi)}_z)_{s,a}
&=(D^{(\pi)}_z)_s\pi_{s,a} \\
&=\frac{D^{(\pi)}_s\mathbb{I}[\exists\tilde{s}\in\tilde{S}~.~s=(z,\tilde{s})]}{p_z}\cdot\pi_{s,a} \\
&=\frac{\Lambda^{(\pi)}_{s,a}\mathbb{I}[\exists\tilde{s}\in\tilde{S}~.~s=(z,\tilde{s})]}{p_z}.
\end{align*}
Therefore, we have
\begin{align}
&\mathbb{E}_{(s,a)\sim\Lambda^{(\pi)}_z}[\rho_{s,a}] \nonumber \\
&=\sum_{s\in S}\sum_{a\in A}\frac{\Lambda^{(\pi)}_{s,a}\mathbb{I}[\exists\tilde{s}\in\tilde{S}~.~s=(z,\tilde{s})]}{p_z}\cdot\rho_{s,a} \nonumber \\
&=p_z^{-1}\sum_{\tilde{s}\in\tilde{S}}\sum_{a\in A}\Lambda^{(\pi)}_{s,a}\rho_{s,a}.
\label{eqn:alg1proof:secondconstraint1}
\end{align}
By assumption, $\pi$ satisfies the demographic parity constraint, which says exactly that (\ref{eqn:alg1proof:secondconstraint1}) satisfies (\ref{eqn:alg1proof:secondconstraint2}). Thus, $\Lambda^{(\pi)}$ satisfies the second constraint.

Therefore, $\Lambda^{(\pi)}$ is a feasible point of the LP, as claimed.

\paragraph{Step 2.}

Let $\lambda\in\mathbb{R}^{|S|\times|A|}$ be a feasible point of the LP in Algorithm~\ref{alg:modelbased}, and let
\begin{align*}
\pi_{s,a}=\frac{\lambda_{s,a}}{\sum_{a'\in A}\lambda_{s,a'}}
\end{align*}
be the corresponding policy returned by Algorithm~\ref{alg:modelbased}. Then, we claim that $\lambda=\Lambda^{(\pi)}$, that $\pi\in\Pi_{\text{DP}}$, and that the value of the objective for $\lambda$ equals $R^{(\pi)}$.

To see the first claim, let $d\in\mathbb{R}^{|S|}$ be defined by
\begin{align*}
d_s=\sum_{a\in A}\lambda_{s,a}.
\end{align*}
We show that $D^{(\pi)}=d$. To this end, note that because $\lambda$ satisfies the first constraint in the LP, we have
\begin{align*}
\sum_{a\in A}\lambda_{s',a}=(1-\gamma)D_{s'}+\gamma\sum_{s\in S}\sum_{a\in A}\lambda_{s,a}P_{s,a,s'}.
\end{align*}
Together with the equality
\begin{align*}
\pi_{s,a}=\frac{\lambda_{s,a}}{d_s},
\end{align*}
we have
\begin{align*}
d_{s'}
&=(1-\gamma)D_{s'}+\gamma\sum_{s\in S}\sum_{a\in A}d_s\pi_{s,a}P_{s,a,s'} \\
&=(1-\gamma)D_{s'}+\gamma(P^{(\pi)}d)_{s'}.
\end{align*}
Thus,
\begin{align}
\label{eqn:alg1proof:dformula3}
d=(1-\gamma)D+\gamma P^{(\pi)}d.
\end{align}
We note that $I-\gamma P^{(\pi)}$ is invertible---in particular, the eigenvalues of $\gamma P^{(\pi)}$ have norms bounded by $\gamma$, so the eigenvalues of $I-\gamma P^{(\pi)}$ have norms bounded below by $1-\gamma$; therefore, the eigenvalues of $I-\gamma P^{(\pi)}$ are nonzero, so it is invertible. As a consequence, we can solve for $d$ in (\ref{eqn:alg1proof:dformula3}) to get
\begin{align*}
d=(1-\gamma)(I-\gamma P^{(\pi)})^{-1}D.
\end{align*}
Finally, from (\ref{eqn:alg1proof:dformula2}) in Step 1 of this proof, we established that $D^{(\pi)}$ similarly satisfies
\begin{align*}
D^{(\pi)}=(1-\gamma)D+\gamma P^{(\pi)}D^{(\pi)}.
\end{align*}
As before, since $I-\gamma P^{(\pi)}$ is invertible, we have
\begin{align*}
D^{(\pi)}=(1-\gamma)(I-\gamma P^{(\pi)})^{-1}D=d.
\end{align*}
Thus,
\begin{align*}
\lambda_{s,a}=d_s\pi_{s,a}=D_s\pi_{s,a}=\Lambda^{(\pi)}_{s,a},
\end{align*}
so the first claim follows.

To see the second claim, note that since $\lambda$ is feasible, it must satisfy the second constraint of the LP. As shown in the first step of this proof, (\ref{eqn:alg1proof:secondconstraint2}) is equivalent to the demographic parity constraint. Thus, $\pi\in\Pi_{\text{DP}}$, as claimed.

To see the third claim, note that
\begin{align*}
R^{(\pi)}&=(1-\gamma)\mathbb{E}_{(s,a)\sim\Lambda^{(\pi)}}[R_{s,a}] \\
&=(1-\gamma)\sum_{s\in S}\sum_{a\in A}\Lambda^{(\pi)}_{s,a}R_{s,a}.
\end{align*}
In other words, the value of the objective of the LP for the point $\lambda$ is equal to $R^{(\pi)}$, as claimed.

\paragraph{Step 3.}

Finally, we use the results from the previous two steps to prove the theorem statement. First, let $\pi^*$ be the solution to (\ref{eqn:problem}). By the claim shown in the first step, $\Lambda^{(\pi^*)}$ is a feasible point of the LP in Algorithm~\ref{alg:modelbased}. Furthermore, by the claim shown in the second step, the value of the objective for $\lambda=\Lambda^{(\pi^*)}$ is $R^{(\pi^*)}$.

Next, let $\lambda^0$ be the solution to the LP in Algorithm~\ref{alg:modelbased}. By the claim shown in the second step, (i) $\lambda_0=\Lambda^{(\pi_0)}$, where $\pi_0$ is the policy returned by Algorithm~\ref{alg:modelbased}, (ii) $\pi_0\in\Pi_{\text{DP}}$, and (iii) the value of the objective for $\lambda^0$ is $R^{(\pi_0)}$.

It follows that $R^{(\pi^*)}\le R^{(\pi_0)}$, since $\pi_0$ maximizes the objective of the LP over feasible points (and $\Lambda^{(\pi^*)}$ is feasible). Since $\pi_0\in\Pi_{\text{DP}}$, it follows that $\pi_0$ is also a solution to (\ref{eqn:problem}). Thus, we have proven the theorem statement. $\qed$

\section{Proof of Theorem~\ref{thm:modelfree}}
\label{sec:modelfreeproof}

Our proof proceeds in three steps. First, we bound the error $|\tilde{\rho}^{(\pi)}-\rho^{(\pi)}|$ due to truncation. Second, we bound the estimation error $|\hat{\rho}^{(\pi)}-\tilde{\rho}^{(\pi)}|$. Third, we combine steps 1 and 2 to prove Theorem~\ref{thm:modelfree}.

\paragraph{Step 1.}

Note that for any policy $\pi$ and any $z\in Z$, we have
\begin{align*}
|\tilde{\rho}_z^{(\pi)}-\rho_z^{(\pi)}|
=\left|\sum_{t=T}^{\infty}\gamma^t\langle\rho_z,\Lambda^{(\pi,t)}\rangle\right|
&\le\sum_{t=T}^{\infty}\gamma^t\rho_{\text{max}} \\
&\le\frac{\gamma^T\rho_{\text{max}}}{1-\gamma} \\
&\le\frac{\sigma\epsilon}{4}.
\end{align*}

\paragraph{Step 2.}

For each $z\in Z$, let $\hat{\rho}_z^{(\pi)}$ be an estimate of $\tilde{\rho}_z^{(\pi)}$ using $m$ sampled rollouts $\zeta^{(1)},...,\zeta^{(m)}$. First, note that
\begin{align*}
|\hat{\rho}_z^{(\pi)}|\le\frac{\rho_{\text{max}}}{1-\gamma}
\end{align*}
is bounded, so we can apply Hoeffding's inequality (see Lemma~\ref{lem:hoeffding}) to get
\begin{align*}
\text{Pr}\left[|\hat{\rho}_z^{(\pi)}-\tilde{\rho}_z^{(\pi)}|\ge\frac{\sigma\epsilon}{4}\right]
&\le2\exp\left(-\frac{m\sigma^2\epsilon^2}{32\rho_{\text{max}}/(1-\gamma)}\right) \\
&\le\frac{\delta}{2}
\end{align*}
Since $Z=\{\text{maj},\text{min}\}$, by a union bound,
\begin{align*}
|\hat{\rho}_z^{(\pi)}-\tilde{\rho}_z^{(\pi)}|&\le\frac{\epsilon}{4}\hspace{0.2in}(\forall z\in Z)
\end{align*}
with probability at least $1-\delta$.

\paragraph{Step 3.}

Now, we can prove Theorem~\ref{thm:modelfree}. First, note that with probability $1-\delta$,
\begin{align*}
|\hat{\rho}_z^{(\pi)}-\rho_z^{(\pi)}|&\le|\hat{\rho}^{(\pi)}-\tilde{\rho}^{(\pi)}|+|\tilde{\rho}^{(\pi)}-R^{(\pi)}| \\
&\le\frac{\sigma\epsilon}{4}+\frac{\sigma^2\epsilon}{4} \\
&\le\frac{\sigma\epsilon}{2},
\end{align*}
for all $z\in Z$. Thus,
\begin{align*}
&|\rho_{\text{maj}}^{(\pi)}-\rho_{\text{min}}^{(\pi)}| \\
&\le|\rho_{\text{maj}}^{(\pi)}-\hat{\rho}_{\text{maj}}^{(\pi)}|+|\hat{\rho}_{\text{maj}}^{(\pi)}-\hat{\rho}_{\text{min}}^{(\pi)}|+|\hat{\rho}_{\text{min}}^{(\pi)}-\rho_{\text{min}}^{(\pi)}| \\
&\le\frac{\sigma\epsilon}{2}+(1-\sigma)\epsilon+\frac{\sigma\epsilon}{2} \\
&=\epsilon,
\end{align*}
which implies that $\pi\in\Pi_{\text{DP},\epsilon}$. Thus, the theorem follows. $\qed$

\section{Proof of Theorem~\ref{thm:unknowntrans}}
\label{sec:thmunknowntransproof}

We prove the following lemma; Theorem~\ref{thm:unknowntrans} follows by choosing $\epsilon'=N^{-1/3}$.
\begin{lemma}
\label{lem:unknowntrans}
Let $\epsilon,\epsilon',\delta\in\mathbb{R}_+$ be given. Assume that $R_{\text{max}}$ be an upper bound on $R$ (i.e., $\|R\|_{\infty}=R_{\text{max}}$) and on $\rho$. Let $\tilde{\epsilon}=\min\{\epsilon,\epsilon'\}$, and let
\begin{align*}
N_0&=\frac{128T^4\cdot|S|^2\cdot R_{\text{max}}^2\cdot\log(2|S|^2|A|/\delta)}{\lambda_0^2\tilde{\epsilon}^2}.
\end{align*}
Let $\hat{M}=(S,A,D,\hat{P},R,T)$, and $\hat{\pi}$ be the optimal policy for $\hat{M}$ in $\hat{\Pi}_{\text{DP},\epsilon/2}$ (i.e., the set of policies satisfying demographic parity for $\hat{M}$). Let $M=(S,A,D,P,R,T)$, and $\pi^*$ be optimal for $M$ in $\Pi_{\text{DP},\epsilon/4}$. Then, $\hat{\pi}\in\Pi_{\text{DP},\epsilon}$, and $R^{(\pi^*)}-R^{(\hat{\pi})}\le\epsilon'$, where $R^{(\pi)}$ is defined for $M$.
\end{lemma}

Our proof proceeds in three steps. First, we prove that for any $\epsilon_0,\delta_0$, we can choose $N_0$ sufficiently large so that
\begin{align*}
\|P-\hat{P}\|_{\infty}\le\epsilon_0
\end{align*}
with probability at least $1-\delta_0$. Second, we prove that assuming $\|P-\hat{P}\|_{\infty}\le\epsilon_0$, then for any policy $\pi$, we have
\begin{align*}
|R^{(\pi)}-\hat{R}^{(\pi)}|\le T^2\cdot|S|\cdot R_{\text{max}}\cdot\epsilon_0,
\end{align*}
where $R^{(\pi)}$ (resp., $\hat{R}^{(\pi)}$) is the expected cumulative distribution assuming the transitions are $P$ (resp., $\hat{P}$), and similarly for the agent rewards $\rho$. Third, we use the first two steps to prove the lemma statement.

\paragraph{Step 1.}

Given $\epsilon_0,\delta_0\in\mathbb{R}_+$, we claim that for
\begin{align*}
N_0=\frac{2\log(2|S|^2|A|/\delta_0)}{\lambda_0^2\epsilon_0^2},
\end{align*}
then our estimate $\hat{P}$ satisfies
\begin{align*}
\|\hat{P}-P\|_{\infty}\le\epsilon_0
\end{align*}
with probability at least $1-\delta_0$.

Let $I_{s,a}$ be the random variable indicating whether our algorithm observes a tuple $(s,a,s')$ (for some $s'\in S$) on a single episode, and let $I_{s,a,i}$ be samples of $I_{s,a}$ for each of the $N_0$ exploratory episodes taken by our algorithm. Let
\begin{align*}
\mu_{s,a}^{(I)}&=\mathbb{E}[I_{s,a}] \\
\hat{\mu}_{s,a}^{(I)}&=\frac{1}{N_0}\sum_{i=1}^{N_0}I_{s,a,i}.
\end{align*}
Then, by Hoeffding's inequality (see Lemma~\ref{lem:hoeffding}), we have
\begin{align}
\label{eqn:thmunknowntrans:step1:1}
\text{Pr}\left[|\hat{\mu}_{s,a}^{(I)}-\mu_{s,a}^{(I)}|\ge\epsilon\right]\le2e^{-2N_0\epsilon^2}.
\end{align}
By assumption, we have
\begin{align*}
\mu_{s,a}^{(I)}=\Lambda^{(\pi_0)}_{s,a}\ge\lambda_0,
\end{align*}
so using $\epsilon=\lambda_0/2$ in (\ref{eqn:thmunknowntrans:step1:1}), we have
\begin{align}
\label{eqn:thmunknowntrans:step1:2}
\hat{\mu}_{s,a}^{(I)}\ge\frac{\mu_{s,a}^{(I)}}{2}\ge\frac{\lambda_0}{2}
\end{align}
with probability at least
\begin{align*}
1-2e^{-N_0(\mu_{s,a}^{(I)})^2/2}\ge1-2e^{-N_0\lambda_0^2/2}.
\end{align*}
Taking a union bound over $s\in S$ and $a\in A$, we have (\ref{eqn:thmunknowntrans:step1:2}) holds for every $s\in S$ and $a\in A$ with probability at least
\begin{align}
\label{eqn:thmunknowntrans:step1:3}
1-2|S|\cdot|A|\cdot e^{-N_0\lambda_0^2/2}.
\end{align}
In this event, we have at least $\frac{N_0\lambda_0}{2}$ observations $(s,a,s')$ (for some $s'\in S$) for every $s\in S$ and $a\in A$.

Now, for an observation $(s,a,s'')$, let $J_{s,a,s'}$ be the random variable indication whether $s'=s''$. Without loss of generality, we assume that we have exactly $N_1=\frac{N_0\lambda_0}{2}$ samples $J_{s,a,s',j}$ of $J_{s,a,s'}$ for each $s\in S$ and $a\in A$. Let
\begin{align*}
\mu_{s,a,s'}^{(J)}&=\mathbb{E}[J_{s,a,s'}] \\
\hat{\mu}_{s,a,s'}^{(J)}&=\frac{1}{N_1}\sum_{j=1}^{N_1}J_{s,a,s',j}.
\end{align*}
Then, by Hoeffding's inequality (see Lemma~\ref{lem:hoeffding}), we have
\begin{align}
\label{eqn:thmunknowntrans:step1:4}
\text{Pr}\left[|\hat{\mu}_{s,a,s'}^{(J)}-\mu_{s,a,s'}|\ge\epsilon\right]\le2e^{-2N_1\epsilon^2}.
\end{align}
Note that by definition, $\mu_{s,a,s'}^{(J)}=P_{s,a,s'}$ and $\hat{\mu}_{s,a,s'}^{(J)}=\hat{P}_{s,a,s'}$. Thus, taking $\epsilon=\epsilon_0$ in (\ref{eqn:thmunknowntrans:step1:4}), we have
\begin{align}
\label{eqn:thmunknowntrans:step1:5}
|P_{s,a,s'}-\hat{P}_{s,a,s'}|\le\epsilon_0
\end{align}
with probability at least
\begin{align*}
1-2e^{-2N_1\epsilon_0^2}.
\end{align*}
Taking a union bound over all $s,s'\in S$ and $a\in A$, we have (\ref{eqn:thmunknowntrans:step1:5}) for all $s,s'\in S$ and $a\in A$ with probability at least
\begin{align}
\label{eqn:thmunknowntrans:step1:6}
1-2|S|^2|A|\cdot e^{-2N_1\epsilon_0^2}.
\end{align}
In other words, in this event, we have $\|P-\hat{P}\|_{\infty}\le\epsilon_0$.

Taking a union bound over (\ref{eqn:thmunknowntrans:step1:3}) and (\ref{eqn:thmunknowntrans:step1:6}), we have
\begin{align*}
\|P-\hat{P}\|_{\infty}\le\epsilon_0
\end{align*}
with probability at least
\begin{align*}
&1-2|S|^2|A|\cdot e^{-2N_1\epsilon_0^2}-2|S|\cdot|A|\cdot e^{-N_0\lambda_0^2/2} \\
&=1-2|S|^2|A|\cdot e^{-N_0\lambda_0\epsilon_0^2}-2|S|\cdot|A|\cdot e^{-N_0\lambda_0^2/2} \\
&\ge1-2|S|^2|A|\cdot e^{-N_0\lambda_0^2\epsilon_0^2/2} \\
&=\delta_0,
\end{align*}
as claimed.

\paragraph{Step 2.}

We claim that assuming
\begin{align*}
\|P-\hat{P}\|_{\infty}\le\epsilon_0,
\end{align*}
then for any policy $\pi$, we have
\begin{align*}
|R^{(\pi)}-\hat{R}^{(\pi)}|\le T^2\cdot|S|\cdot R_{\text{max}}\cdot\epsilon_0,
\end{align*}
where $R^{(\pi)}$ is the expected cumulative reward for $\pi$ in the MDP $M=(S,A,D,P,R,T)$ and $\hat{R}^{(\pi)}$ is the expected cumulative reward for $\pi$ in the MDP $\hat{M}=(S,A,D,\hat{P},R,T)$. Note that we have replaced the discount factor $\gamma$ with the time horizon $T$. In addition, for all $z\in Z$, we have
\begin{align*}
|\rho_z^{(\pi)}-\hat{\rho}_z^{(\pi)}|\le T\cdot|S|\cdot R_{\text{max}}\cdot\epsilon_0,
\end{align*}
where
\begin{align*}
\rho_z^{(\pi)}=\mathbb{E}_{(s,a)\sim\Lambda_z^{(\pi)}}[\rho_{s,a}],
\end{align*}
is the expected cumulative agent reward for the MDP $M$, and $\hat{\rho}_z^{(\pi)}$ is the expected cumulative agent reward for the MDP $\hat{M}$. We only prove the claim for $|R^{(\pi)}-\hat{R}^{(\pi)}|$; the claim for $|\rho_z^{(\pi)}-\hat{\rho}_z^{(\pi)}|$ follows using the same argument.

Let $W\in\mathbb{R}^{|S|}$ be
\begin{align*}
W_s=\langle\pi_{s,\cdot},R_{s,\cdot}\rangle=\sum_{a\in A}\pi_{s,a}R_{s,a}.
\end{align*}
Then, we have
\begin{align*}
R^{(\pi)}
&=\langle R,\Lambda^{(\pi)}\rangle \\
&=\sum_{s\in S}\sum_{a\in A}D^{(\pi)}_s\pi_{s,a}R_{s,a} \\
&=\langle D^{(\pi)},W\rangle.
\end{align*}
Now, note that
\begin{align*}
D^{(\pi,t)}=(P^{(\pi)})D,
\end{align*}
so we have
\begin{align*}
D^{(\pi)}
&=\frac{1}{T}\sum_{t=0}^{T-1}D^{(\pi,t)} \\
&=\frac{1}{T}\left[\sum_{t=0}^{T-1}(P^{(\pi)})^tD\right].
\end{align*}
Thus,
\begin{align*}
R^{(\pi)}=\sum_{t=0}^{T-1}\left\langle(P^{(\pi)})^tD,W\right\rangle.
\end{align*}
Similarly,
\begin{align*}
\hat{R}^{(\pi)}=\sum_{t=0}^{T-1}\left\langle(\hat{P}^{(\pi)})^tD,W\right\rangle.
\end{align*}
It follows that
\begin{align*}
R^{(\pi)}-\hat{R}^{(\pi)}
&=\sum_{t=0}^{T-1}\left\langle(P^{(\pi)})^tD-(\hat{P}^{(\pi)})^tD,W\right\rangle.
\end{align*}
Thus,
\begin{align}
\label{eqn:hatP:rewardbound1}
&|R^{(\pi)}-\hat{R}^{(\pi)}| \nonumber \\
&\le\sum_{t=0}^{T-1}\|(P^{(\pi)})^tD-(\hat{P}^{(\pi)})^tD\|_{\infty}\cdot\|W\|_1 \nonumber \\
&\le\sum_{t=1}^{T-1}\epsilon_0\cdot T\cdot\|D\|_{\infty}\cdot\|W\|_1,
\end{align}
where the first line follows from H\"{o}lder's inequality, and the second line follows from properties of the matrix norm, from the fact that
\begin{align*}
&\|(P^{(\pi)})^t-(\hat{P}^{(\pi)})^t\|_{\infty} \\
&=\|P^{(\pi)}-\hat{P}^{(\pi)}\|_{\infty}\cdot\sum_{s=0}^{t-1}\|P^{(\pi)}\|_{\infty}^s\cdot\|\hat{P}^{(\pi)}\|_{\infty}^{t-s-1} \\
&\le\epsilon_0\cdot T,
\end{align*}
and using the fact that the summand is zero for $t=0$ since $(P^{(\pi)})^0=(\hat{P}^{(\pi)})^0=I$. Note that
\begin{align}
\label{eqn:hatP:transitionbound}
\|D\|_{\infty}\le1
\end{align}
Furthermore,
\begin{align*}
|W_s|=|\langle\pi_{s,\cdot},R_{s,\cdot}\rangle|
&\le\|\pi_{s,\cdot}\|_1\cdot\|R_{s,\cdot}\|_{\infty} \\
&\le\|R_{s,\cdot}\|_{\infty} \\
&\le R_{\text{max}},
\end{align*}
where the first inequality follows from H\"{o}lder's inequality and the second inequality follows since $\pi_{s,\cdot}$ is a discrete probability distribution. Therefore,
\begin{align}
\label{eqn:hatP:wbound}
\|W\|_1=\sum_{s\in S}|W_s|\le|S|\cdot R_{\text{max}}.
\end{align}
Plugging (\ref{eqn:hatP:transitionbound}) and (\ref{eqn:hatP:wbound}) into (\ref{eqn:hatP:rewardbound1}) gives
\begin{align*}
|R^{(\pi)}-\hat{R}^{(\pi)}|
\le T^2\cdot|S|\cdot R_{\text{max}}\cdot\epsilon_0.
\end{align*}

\paragraph{Step 3.}

Now, we prove the theorem. Let $\hat{\pi}$ be the optimal policy for $\hat{M}$ (i.e., transitions $\hat{P}$) satisfying $\hat{\pi}\in\hat{\Pi}_{\text{DP},\epsilon/2}$. Similarly, let $\pi^*$ be the optimal policy for $M$ (i.e., transitions $P$) satisfying $\pi^*\in\Pi_{\text{DP},\epsilon/4}$. We apply the second step with
\begin{align*}
\epsilon_0&=\frac{\tilde{\epsilon}}{8T^2\cdot|S|\cdot R_{\text{max}}} \\
\delta_0&=\delta.
\end{align*}
Then, by the first step, for all $z,z'\in Z$, we have
\begin{align*}
&\rho_z^{(\hat{\pi})}-\rho_{z'}^{(\hat{\pi})} \\
&\le(\rho_z^{(\hat{\pi})}-\hat{\rho}_z^{(\hat{\pi})})
+(\hat{\rho}_z^{(\hat{\pi})}-\hat{\rho}_{z'}^{(\hat{\pi})})+
+(\rho_{z'}^{(\hat{\pi})}-\hat{\rho}_{z'}^{(\hat{\pi})}) \\
&\le T^2\cdot|S|\cdot R_{\text{max}}\cdot\epsilon_0+\frac{\epsilon}{2}+T^2\cdot|S|\cdot R_{\text{max}}\cdot\epsilon_0 \\
&\le\epsilon,
\end{align*}
where the inequality on the third line follows because $\hat{\pi}\in\hat{\Pi}_{\text{DP},\epsilon/2}$, and the inequality on the last line follows since $\tilde{\epsilon}\le\epsilon$. Thus, we guarantee that $\hat{\pi}\in\Pi_{\text{DP},\epsilon}$.

Next, note that similarly, for all $z,z'\in Z$, we have
\begin{align*}
\hat{\rho}_z^{(\pi^*)}-\hat{\rho}_{z'}^{(\pi^*)}\le\frac{\epsilon}{2},
\end{align*}
so $\pi^*\in\hat{\Pi}_{\text{DP},\epsilon/2}$. As a consequence, we have
\begin{align*}
&R^{(\pi^*)}-R^{(\hat{\pi})} \\
&=(R^{(\pi^*)}-\hat{R}^{(\pi^*)})+(\hat{R}^{(\pi^*)}-\hat{R}^{(\hat{\pi})})+(\hat{R}^{(\hat{\pi})}-R^{(\hat{\pi})}) \\
&\le T^2\cdot|S|\cdot R_{\text{max}}\cdot\epsilon_0+0+T^2\cdot|S|\cdot R_{\text{max}}\cdot\epsilon_0 \\
&\le\epsilon',
\end{align*}
where the inequality on the third line follows because $\hat{\pi}$ maximizes $\hat{R}^{(\pi)}$ over $\pi\in\hat{\Pi}_{\text{DP},\epsilon/2}$ and $\pi^*\in\hat{\Pi}_{\text{DP},\epsilon/2}$, and the inequality on the last line follows since $\tilde{\epsilon}\le\epsilon'$.

Thus, the lemma statement follows. $\qed$

\section{Technical Lemmas}

\begin{lemma}
\label{lem:hoeffding}
(Hoeffding's inequality) Let $X\sim p_X$ be a random variable with domain $[a,b]\subseteq\mathbb{R}$ and mean $\mu_X$, and let $\hat{\mu}_X=n^{-1}\sum_{i=1}^nX_i$ be an estimate of $\mu_X$ a using $n$ i.i.d. samples $X_i\sim p_X$. Then, we have
\begin{align}
\text{Pr}\left[|\hat{\mu}_X-\mu_X|\ge\epsilon\right]\le2\exp\left(-\frac{2n\epsilon^2}{(b-a)^2}\right),
\end{align}
where the probability is taken over the randomness in the i.i.d. samples $X_1,...,X_n\sim p_X$.
\end{lemma}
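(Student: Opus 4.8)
The plan is to use the standard Chernoff--Cram\'er exponential-moment argument. I would first bound the upper tail $\text{Pr}[\hat{\mu}_X-\mu_X\ge\epsilon]$. For any $s>0$, Markov's inequality applied to the random variable $e^{s(\hat{\mu}_X-\mu_X)}$ gives $\text{Pr}[\hat{\mu}_X-\mu_X\ge\epsilon]\le e^{-s\epsilon}\mathbb{E}[e^{s(\hat{\mu}_X-\mu_X)}]$. Writing $\hat{\mu}_X-\mu_X=\tfrac1n\sum_{i=1}^n(X_i-\mu_X)$ and using the independence of the $X_i$, the moment generating function factors as $\mathbb{E}[e^{s(\hat{\mu}_X-\mu_X)}]=\prod_{i=1}^n\mathbb{E}[e^{(s/n)(X_i-\mu_X)}]$.

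The key ingredient is Hoeffding's lemma: if $Y$ takes values in an interval of length $\ell$ and $\mathbb{E}[Y]=0$, then $\mathbb{E}[e^{tY}]\le e^{t^2\ell^2/8}$ for all $t\in\mathbb{R}$. I would prove this using convexity of $y\mapsto e^{ty}$: bounding $e^{ty}$ by the chord joining its values at the two endpoints of the interval, taking expectations, and using $\mathbb{E}[Y]=0$, one obtains $\mathbb{E}[e^{tY}]\le e^{\phi(t)}$ for a function $\phi$ with $\phi(0)=\phi'(0)=0$ and $\phi''(t)\le\ell^2/4$ for all $t$; Taylor's theorem with Lagrange remainder then gives $\phi(t)\le t^2\ell^2/8$. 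Applying this to $Y_i=X_i-\mu_X$, which has mean zero and (since $\mu_X\in[a,b]$) lies in the interval $[a-\mu_X,b-\mu_X]$ of length $b-a$, yields $\mathbb{E}[e^{(s/n)(X_i-\mu_X)}]\le\exp\big(\tfrac{(s/n)^2(b-a)^2}{8}\big)$.

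Combining, $\text{Pr}[\hat{\mu}_X-\mu_X\ge\epsilon]\le\exp\big(-s\epsilon+\tfrac{s^2(b-a)^2}{8n}\big)$. Minimizing the exponent over $s>0$ with the choice $s=4n\epsilon/(b-a)^2$ gives $\text{Pr}[\hat{\mu}_X-\mu_X\ge\epsilon]\le\exp\big(-\tfrac{2n\epsilon^2}{(b-a)^2}\big)$. Running the identical argument with every $X_i$ replaced by $-X_i$ bounds the lower tail $\text{Pr}[\hat{\mu}_X-\mu_X\le-\epsilon]$ by the same quantity, and a union bound over these two one-sided events produces the factor of $2$ in the stated two-sided bound.

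The only step with real content is the estimate $\phi''(t)\le\ell^2/4$ in Hoeffding's lemma: expressing $\phi$ in terms of a tilted reweighting parameter $p(t)\in[0,1]$, one finds $\phi''(t)=p(t)(1-p(t))\,\ell^2$, and $p(1-p)\le1/4$. Everything else is routine bookkeeping. As this is a classical inequality, a perfectly acceptable alternative is simply to cite it rather than reproduce the proof.
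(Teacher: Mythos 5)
Your argument is the standard and correct proof of Hoeffding's inequality: Markov's inequality applied to the exponential moment, factorization of the moment generating function by independence, Hoeffding's lemma $\mathbb{E}[e^{tY}]\le e^{t^2\ell^2/8}$ for a mean-zero variable supported on an interval of length $\ell$ (proved via the chord bound from convexity and the second-derivative estimate $\phi''(t)=p(1-p)\ell^2\le\ell^2/4$), and optimization over $s$ with $s=4n\epsilon/(b-a)^2$, which indeed yields the exponent $-2n\epsilon^2/(b-a)^2$; the union bound over the two tails gives the factor of $2$. The paper itself does none of this: its entire proof is a citation to a standard reference, exactly the alternative you flag in your last sentence as acceptable. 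So there is no gap and nothing to reconcile; your writeup supplies a self-contained derivation where the paper delegates to the literature, which is a reasonable trade-off for a classical result used only as a technical tool.
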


\begin{proof}
See~\cite{wainwright2019high} for a proof. 
\end{proof}

\section{Experimental Details \& Additional Results}
\label{sec:expappendix}

\begin{figure*}[t]
\centering
\begin{tabular}{ccc}
\includegraphics[width=0.27\textwidth]{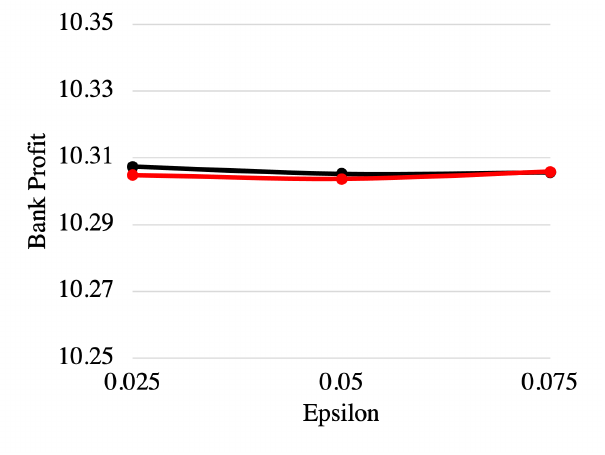} &
\includegraphics[width=0.27\textwidth]{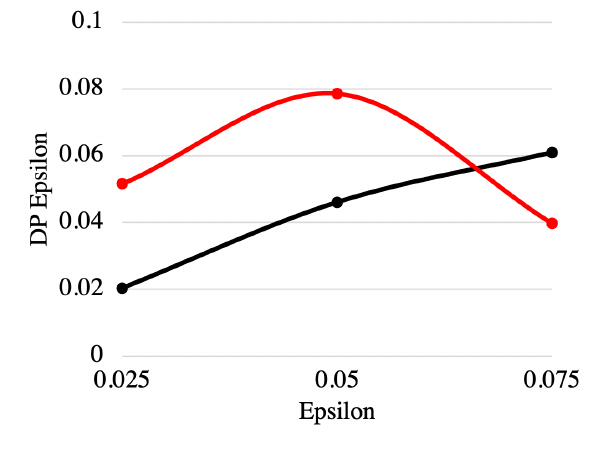} &
\includegraphics[width=0.34\textwidth]{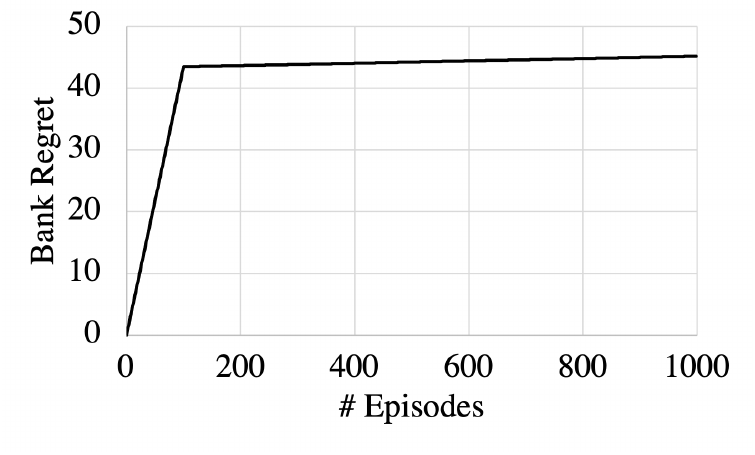} \\
(a) & (b) & (c)
\end{tabular}
\caption{Demographic parity (a) objective value, (b) constraint value for our algorithm (black) and the optimistic baseline (red). (c) Regret of our reinforcement learning algorithm.}
\label{fig:appendixexp}
\end{figure*}

\textbf{Parameters.}
We use the following parameters for our loan MDP:
\begin{align*}
I&=0.17318629 \\
p_Z&=0.29294318 \\
\alpha_{\text{maj}}&=0.65338681 \\
\beta_{\text{maj}}&=0.20783559 \\
\alpha_{\text{min}}&=0.48824268 \\
\beta_{\text{min}}&=0.48346869 \\
\lambda&=0.01 \\
\tau&=0.1 \\
\epsilon&=0.1 \\
T&=50 \\
T_{\text{maj}}&=10 \\
T_{\text{min}}&=7.
\end{align*}

\textbf{Additional results for Algorithm~\ref{alg:modelfree}.}
We additionally study how Algorithm~\ref{alg:modelfree} varies with the fairness constraint threshold $\epsilon$. In Figure~\ref{fig:appendixexp} (a,b), we show the objective value achieved and the fairness constraint value achieved by our algorithm and the optimistic algorithm for the demographic parity constraint. While the objective values achieved are very similar, the optimistic algorithm does not always satisfy the fairness constraint. In particular, for $\epsilon=0.025$, its constraint value is $0.052$ (exceeds $\epsilon$ by 108\%), and for $\epsilon=0.05$, it is $0.079$ (exceeds $\epsilon$ by 59\%). Intuitively, there are multiple policies that achieve the same objective value, but the optimistic algorithm sometimes fails to find the ones that are fair. In contrast, our algorithm always satisfies the fairness constraint.

\textbf{Results for Algorithm~\ref{alg:modelbased}.}
We have evaluated Algorithm~\ref{alg:modelbased} on a modified version of our loan MDP where $\alpha$ and $\beta$ are discretized and thresholded to make the state space finite. For this MDP, we have compared Algorithm~\ref{alg:modelbased} to solving an unconstrained MDP---i.e., without the demographic parity fairness constraint. We use $\epsilon = 0.01$. Our results are as follows:
(i) for Algorithm~\ref{alg:modelbased}, the cumulative expected reward is $0.68$ and the fairness constraint value is $0.01$, and
(ii) for the unconstrained algorithm, the cumulative expected reward is $0.69$ and the fairness constraint value is $0.26$.
In other words, for a small reduction in reward, our algorithm substantially improves fairness. The remaining baselines cannot be implemented using the approach in Algorithm~\ref{alg:modelbased}.

\textbf{Results for reinforcement learning.}
We have run our reinforcement learning algorithm in conjunction with the
We run the algorithm for $N=1000$ episodes total. In particular, we explore for $100$ episodes using a conservative policy $\pi_0$ that ignores the state; then, we use the estimated transitions to learn the optimal policy $\hat{\pi}$ and use $\hat{\pi}$ for the remaining $900$ episodes. Since our model is parameterized by $\alpha$ and $\beta$, we estimate these quantities instead of directly estimating the transitions. We show the regret compared to the optimal policy in Figure~\ref{fig:appendixexp} (c), averaged over 5 iterations. As can be seen, the regret quickly increases while using $\pi_0$, and then becomes almost flat when using $\hat{\pi}$. We note that our algorithm satisfies the fairness constraint across all episodes and iterations.

\end{document}